\title{A Unified Approach to Count-Based Weakly-Supervised Learning}
\author{%
  Vinay Shukla \\
  Department of Computer Science\\
  University of California, Los Angeles\\
  \texttt{vshukla@g.ucla.edu} \\
  \And
  Zhe Zeng$^*$ \\
  Department of Computer Science\\
  University of California, Los Angeles\\
  \texttt{zhezeng@cs.ucla.edu} \\
  \And
  Kareem Ahmed$^*$ \\
  Department of Computer Science\\
  University of California, Los Angeles\\
  \texttt{ahmedk@cs.ucla.edu} \\
  \And
  Guy Van den Broeck \\
  Department of Computer Science\\
  University of California, Los Angeles\\
  \texttt{guyvdb@cs.ucla.edu} \\
}
\newlength\myheight
\newlength\mydepth
\settototalheight\myheight{Xygp}
\newcommand*\inlinegraphics[1]{%
  \settototalheight\myheight{Xygp}%
  \settodepth\mydepth{Xygp}%
  \raisebox{-\mydepth}{\includegraphics[height=\myheight]{#1}}%
}
\renewcommand\fbox{\fcolorbox{red}{white}}
\newif\ifcomments
    \newcommand{\kareem}[1]{\textbf{\textcolor{magenta}{{Kareem: #1}}}}
    \newcommand{\zz}[1]{\textbf{\textcolor{Peach}{{Zhe: #1}}}}
    \newcommand{\guy}[1]{\textbf{\textcolor{blue}{{Guy: #1}}}}
    \newcommand{\vinay}[1]{\textbf{\textcolor{olive}{{Vinay: #1}}}}
    \newcommand{\kareem}[1]{}
    \newcommand{\zz}[1]{}
    \newcommand{\guy}[1]{}
    \newcommand{\vinay}[1]{}
\DeclareMathOperator{\E}{\mathds{E}}
\newcommand{\loss}{\ensuremath{\ell}\xspace}
\newcommand{\vv}[1]{\boldsymbol{#1}}
\newcommand{\R}{\ensuremath{\mathbb{R}}\xspace} 
\newcommand{\inputspace}{\ensuremath{\mathcal{X}}\xspace}
\newcommand{\outputspace}{\ensuremath{\mathcal{Y}}\xspace}
\newcommand{\x}{\ensuremath{\vv{x}}\xspace}
\newcommand{\y}{\ensuremath{y}\xspace}
\newcommand{\by}{\ensuremath{\boldsymbol{y}}\xspace}
\newcommand{\pred}{\ensuremath{\hat{y}}\xspace}
\newcommand{\weaky}{\ensuremath{\Tilde{y}}\xspace}
\newcommand{\poslabel}{\ensuremath{1}\xspace}
\newcommand{\neglabel}{\ensuremath{0}\xspace}
\newcommand{\positive}{\ensuremath{p}\xspace}
\newcommand{\unlabel}{\ensuremath{u}\xspace}
\newcommand{\mixprop}{\ensuremath{\beta}\xspace}
\newcommand{\logy}{\ensuremath{t}\xspace}
\newcommand{\dparray}{\ensuremath{A}\xspace}
\newcommand{\logmexp}{\ensuremath{\mathtt{log1mexp}}\xspace}
\newcommand{\logsumexp}{\ensuremath{\mathtt{logsumexp}}\xspace}
\newcommand{\inputdim}{\ensuremath{d}\xspace}
\DeclareMathOperator{\kl}{\mathds{D}_{\mathit{KL}}}
\newcommand{\f}{\ensuremath{f}\xspace}
\newcommand{\risk}{\ensuremath{R}\xspace}
\newcommand{\data}{\ensuremath{\mathcal{D}}\xspace}
\newcommand{\bag}{\ensuremath{B}\xspace}
\newcommand{\labelsum}{\ensuremath{s}\xspace}
\newcommand{\bin}{\ensuremath{\mathtt{Bin}}\xspace}
\newcommand{\llp}{\ensuremath{\mathit{llp}}\xspace}
\newcommand{\oursprincipled}{CL~(Ours)}
\newcommand{\oursexpected}{CL-expect~(Ours)}
\newcommand{\bigO}{\ensuremath{\mathcal{O}}\xspace}
\newcommand{\id}[1]{\llbracket{#1}\rrbracket}
\theoremstyle{plain}
\newtheorem{theorem}{Theorem}[section]
\newtheorem{proposition}[theorem]{Proposition}
\newtheorem{lemma}[theorem]{Lemma}
\theoremstyle{definition}
\newtheorem{definition}[theorem]{Definition}
\theoremstyle{remark}
\definecolor{ourspecialtextcolor}{rgb}{0.528, 0.471, 0.701}
\renewcommand{\algorithmiccomment}[1]{\bgroup\hskip1em\textcolor{ourspecialtextcolor}{//~\textsl{#1}}\egroup}
\newcommand{\ie}{i.e.,\ }
\begin{document}

\maketitle
\def\thefootnote{*}\footnotetext{Equal contribution.}\def\thefootnote{\arabic{footnote}}

\begin{abstract}
High-quality labels are often very scarce, whereas unlabeled data with inferred weak labels occurs more naturally.
In many cases, these weak labels dictate the frequency of each respective class over a set of instances.
In this paper, we develop a unified approach to learning from such weakly-labeled data, which we call \emph{count-based weakly-supervised learning}.
At the heart of our approach is the ability to compute the probability of exactly $k$ out of $n$ outputs being set to true.
This computation is differentiable, exact, and efficient.
Building upon the previous computation, we derive a \emph{count loss} penalizing the model for deviations in its distribution
from an arithmetic constraint defined over label counts. 
We evaluate our approach on three 
common weakly-supervised learning paradigms and observe that our proposed approach achieves state-of-the-art or highly competitive results across all three of the paradigms.
\end{abstract}

\section{Introduction}\label{sec:intro}

Weakly supervised learning~\citep{zhou2018brief} enables a model to learn from data with restricted, partial or inaccurate labels, often known as \emph{weakly-labeled data}.
Weakly supervised learning fulfills a need arising in many real-world settings that are subject to privacy or budget constraints, such as privacy sensitive data~\citep{wojtusiak2011using},  medical image analysis~\citep{bortsova2018deep}, clinical practice~\citep{quellec2017multiple}, personalized advertisement~\citep{bekker2020learning} and knowledge base completion~\citep{galarraga2015fast,zupanc2018estimating}, to name a few.
In some settings, \emph{instance-level labels} are unavailable.
Instead, instances are grouped into \emph{bags} with corresponding \emph{bag-level labels} that are a function of the instance labels, e.g., the proportion of positive labels in a bag.
A key insight that we bring forth is that such weak supervision can very often be construed as \emph{enforcing constraints on label counts of data}.

More concretely, we consider three prominent weakly supervised learning paradigms. 
The first paradigm is known as \emph{learning from label proportions}~\citep{quadrianto2008llp}.
Here the weak supervision consists in the \emph{proportion} of 
positive labels in a given bag, which can be interpreted
as \emph{the count of positive instances} in such a bag.
The second paradigm, whose supervision is strictly weaker
than the former, is \emph{multiple instance learning}~\citep{MIL1997framework, dietterich1997musk}.
Here the bag labels only indicate the \emph{existence} of
at least one positive instance in a bag, which can
be recast as to whether \emph{the count of positive 
instances} is greater than zero.
The third paradigm, \emph{learning from positive and unlabeled data}~\citep{PUoriginal, PUoriginal2}, grants access to the ground truth labels for a subset of \emph{only the positive instances}, providing only a class prior for what remains. We can recast the class prior as \emph{a distribution of the count of positive labels}.

\begin{table*}[t!]
    \resizebox{0.15\columnwidth}{!}{
    \begin{subtable}[t]{.18\linewidth}
      \centering
        \begin{tabular}[t]{c|c}
        \toprule
        $\x$ & $\y$ \\
        \midrule
        \huge
        \makecell{
        \inlinegraphics{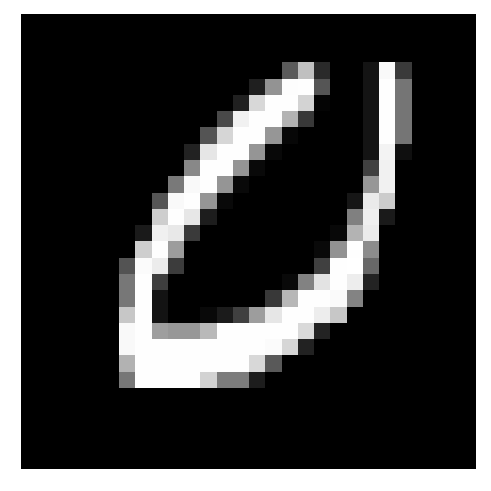} 
        }
        & 0 \\
        \hline
        \huge
        \makecell{
        \inlinegraphics{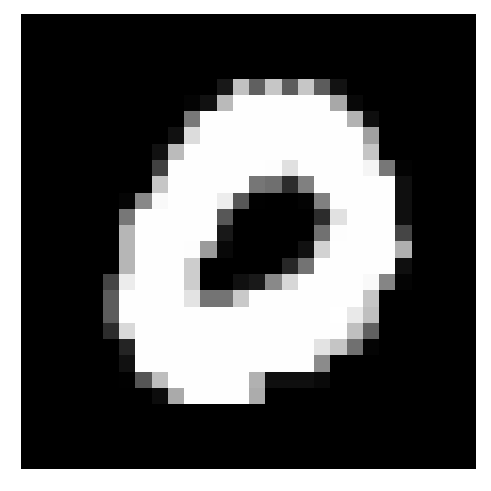} 
        }
        & 0 \\
        \hline
        \huge
        \makecell{
        \inlinegraphics{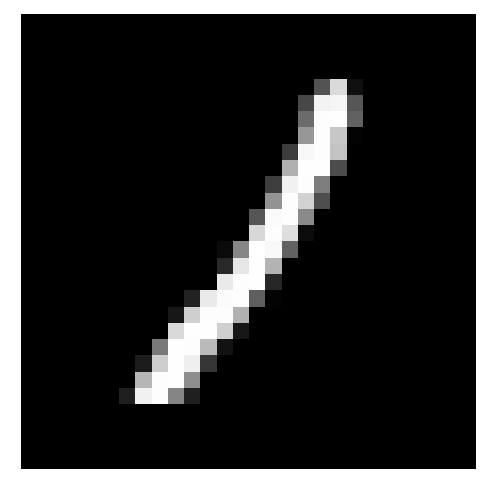} 
        }
        & 1 \\
        \hline
        \huge
        \makecell{
        \inlinegraphics{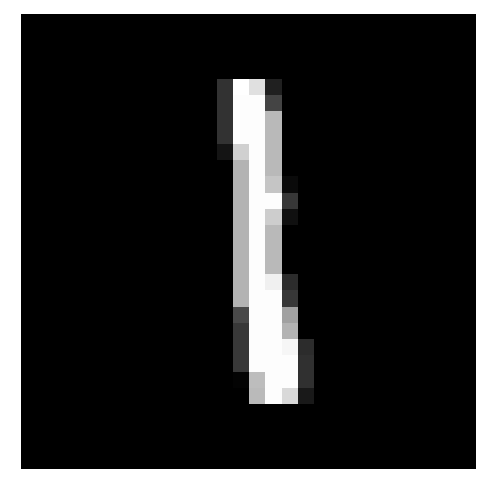} 
        }
        & 1 \\
        \bottomrule
        \end{tabular}
        \caption{Classical}
        \label{fig: Classical example}
    \end{subtable}%
    }
    \hfill
    \resizebox{0.23\columnwidth}{!}{
    \begin{subtable}[t]{.28\linewidth}
      \centering
        \begin{tabular}[t]{c|c}
        \toprule
        $\{{\x}_{i}\}_{i=1}^k$ & $\weaky = \sum \y_i / k$ \\
        \midrule
        \huge
        \makecell{
        \inlinegraphics{fig/mnist/zero_1.pdf}
        \inlinegraphics{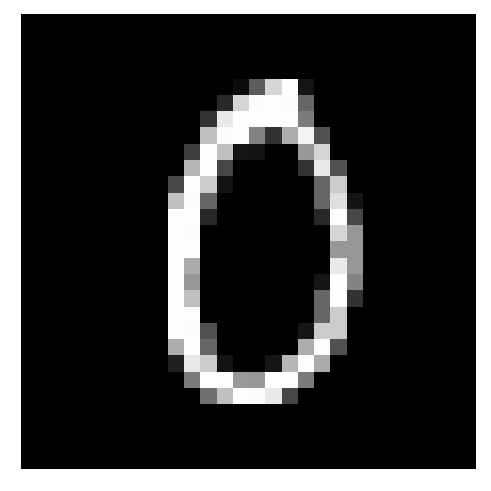}
        \inlinegraphics{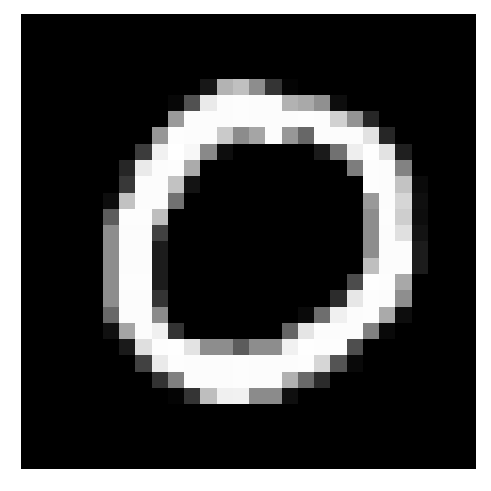}
        }
        & 0 \\
        \hline
        \huge
        \makecell{
        \inlinegraphics{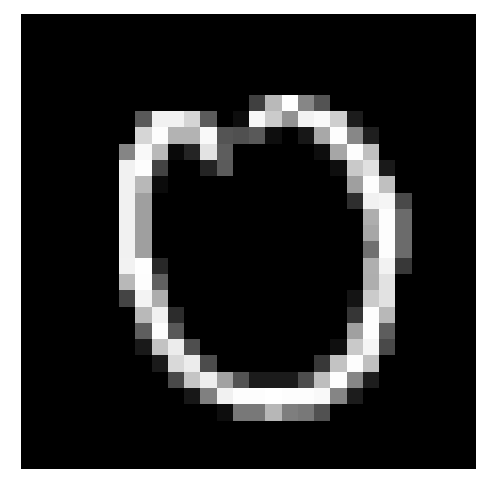}
        \inlinegraphics{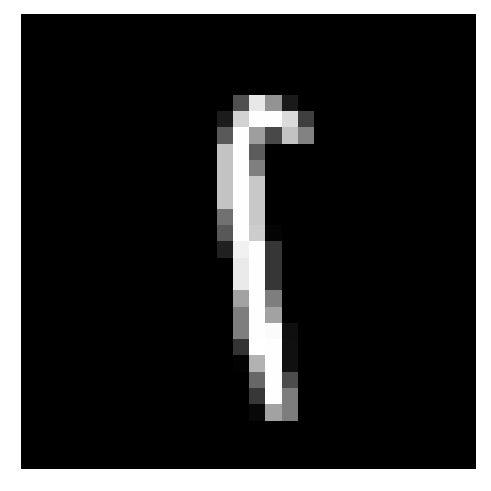}
        \inlinegraphics{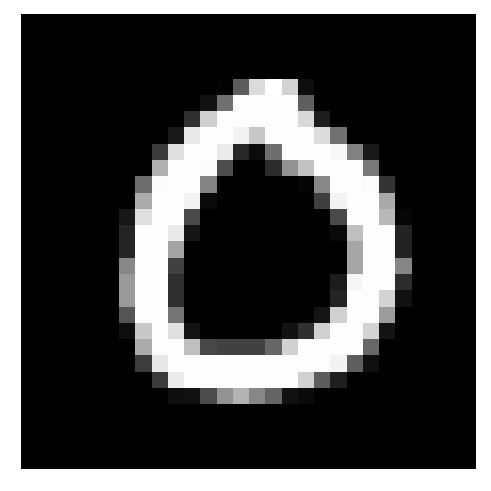} 
        }
        & 1/3 \\
        \hline
        \huge
        \makecell{
        \inlinegraphics{fig/mnist/ones_1.pdf}
        \inlinegraphics{fig/mnist/zero_0.pdf} \\
        \inlinegraphics{fig/mnist/ones_2.pdf}
        \inlinegraphics{fig/mnist/zero_2.pdf}
        \inlinegraphics{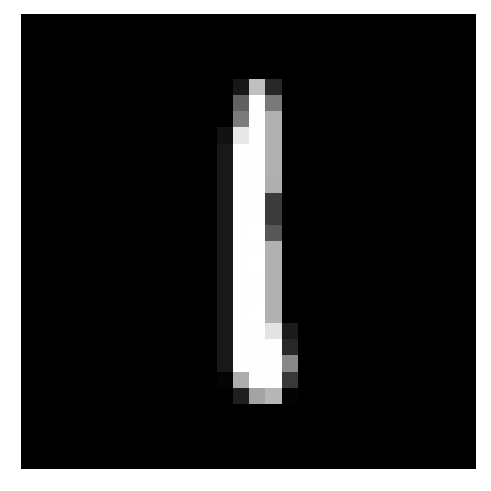}
        }
        & 3/5 \\
        \bottomrule
        \end{tabular}
        \caption{LLP}
        \label{fig: Label Proportion example}
    \end{subtable}%
    }
    \hfill
    \resizebox{0.23\columnwidth}{!}{
    \begin{subtable}[t]{.28\linewidth}
      \centering
        \begin{tabular}[t]{c|c}
        \toprule
        $\{{\x}_{i}\}_{i=1}^k$ & $\weaky = \max\{\y_i\}$ \\
        \midrule
        \huge
        \makecell{
        \inlinegraphics{fig/mnist/zero_0.pdf}
        \inlinegraphics{fig/mnist/zero_4.pdf}
        \inlinegraphics{fig/mnist/zero_5.pdf}
        } 
        & 0 \\
        \hline
        \huge
        \makecell{
        \inlinegraphics{fig/mnist/ones_0.pdf}
        \inlinegraphics{fig/mnist/zero_2.pdf}
        \inlinegraphics{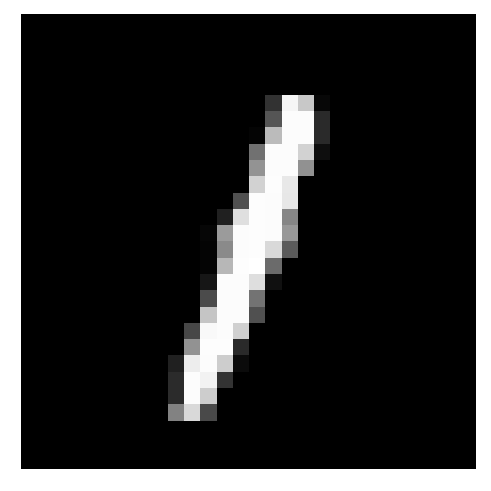}
        }
        & 1 \\
        \hline
        \huge
        \makecell{
        \inlinegraphics{fig/mnist/zero_1.pdf}
        \inlinegraphics{fig/mnist/zero_2.pdf} \\
        \inlinegraphics{fig/mnist/zero_3.pdf}
        \inlinegraphics{fig/mnist/ones_3.pdf}
        \inlinegraphics{fig/mnist/ones_5.pdf}
        }
        & 1 \\
        \bottomrule
        \end{tabular}
        \caption{MIL}
        \label{fig: MIL example}
    \end{subtable}%
    }
    \hfill
    \resizebox{0.15\columnwidth}{!}{
    \begin{subtable}[t]{.18\linewidth}
      \centering
        \begin{tabular}[t]{c|c}
        \toprule
        $\x$ & $\weaky$ \\
        \midrule
        \huge
        \makecell{
        \inlinegraphics{fig/mnist/zero_5.pdf}
        } 
        & ? \\
        \hline
        \huge
        \makecell{
        \inlinegraphics{fig/mnist/ones_5.pdf}
        } 
        & 1 \\
        \hline
        \huge
        \makecell{
        \inlinegraphics{fig/mnist/zero_3.pdf}
        } 
        & ? \\
        \hline
        \huge
        \makecell{
        \inlinegraphics{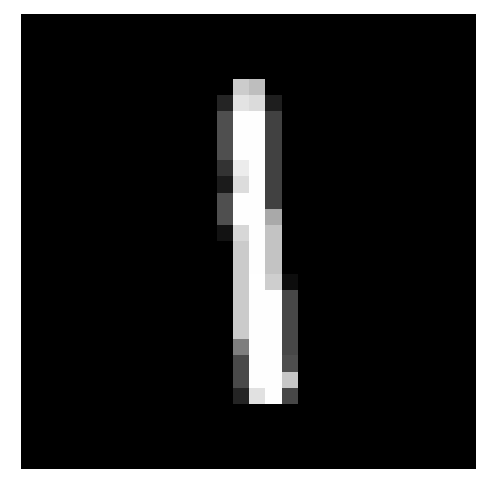}
        } 
        & ? \\
        \bottomrule
        \end{tabular}

        \caption{PU Learning}
        \label{fig: PU example}
    \end{subtable}%
    }
    \caption{
    A comparison of the tasks considered in the three weakly supervised settings, LLP~(cf. Section~\ref{sec: LLP intro}), MIL~(cf. Section~\ref{sec: MIL intro}) and PU learning~(cf. Section~\ref{sec: PU intro}), against the classical fully supervised setting
    for binary classification, using digits from the MNIST dataset.
    }
    \label{tb: example datasets}
\end{table*}

Leveraging the view of weak supervision as a constraint on label counts, 
we utilize a simple, efficient and probabilistically sound approach to 
weakly-supervised learning.
More precisely, we train a neural network to make instance-level predictions 
that conform to the desired label counts.
To this end, we propose a \emph{differentiable count loss} that characterizes how close the network's distribution comes to the label counts; a loss which is surprisingly tractable.
Compared to prior methods, this approach does not approximate probabilities but computes them \emph{exactly}.
Our empirical evaluation demonstrates that our proposed count loss significantly 
boosts the classification performance on all three aforementioned settings.

\section{Problem Formulations}
\label{sec: problem formulation}

In this section, we formally introduce the aforementioned weakly supervised learning paradigms.
For notation,
let $\inputspace \in \R^{\inputdim}$ be the input feature space over $\inputdim$ features and $\outputspace = \{\neglabel, \poslabel\}$ be a binary label space. We write
$\x \in \inputspace$ and $\y \in \outputspace$ for the input and output random variables respectively.
Recall that in fully-supervised binary classification, it is assumed that each feature and label pair $(\x, \y) \in \inputspace \times \outputspace$ is sampled independently from a joint distribution $p(\x, \y)$.
A classifier $\f$ is learned to minimize the risk
$\risk(\f) = \E_{(\x, \y) \sim p} [\loss(\f(\x), \y)]$
where $\loss: [0, 1] \times \outputspace \rightarrow \R_{\geq 0}$ is the cross entropy loss function.
Typically, the true distribution $p(\x, \y)$ is implicit and cannot be observed.
Therefore, a set of $n$ training samples, $\data = \{(\x_i, \y_i)\}_{i = 1}^n$, is used and the empirical risk,
$\hat{\risk}(\f) = \frac{1}{n} \sum_{i = 1}^n \loss(\f(\x_i), \y_i)$,
is minimized in practice.
In the count-based weakly supervised learning settings, the supervision is given at a bag level instead of an instance level.
We formally introduce these settings as below.

\subsection{Learning from Label Proportions}
\label{sec: LLP intro}
\emph{Learning from label proportions~(LLP)} \citep{quadrianto2008llp} assumes that
each instance in the training set is assigned to bags and only the proportion of positive instances in each bag is known.
One example is in light of the coronavirus pandemic, where infection rates were typically reported based on geographical boundaries such as states and counties.
Each boundary can be treated as a bag with the infection rate as the proportion annotation.

The goal of LLP is to learn an instance-level classifier $\f: \inputspace \rightarrow [0, 1]$ even though it is trained on bag-level labeled data.
Formally, the training dataset consists of $m$ bags, denoted by 
$\data = \{(\bag_i, \weaky_i)\}_{i = 1}^m$ where each bag $\bag_i = \{{\x}_j\}_{j = 1}^k$ consist of $k$ instances and this $k$ could vary among different bags.
The bag proportions are defined as $\weaky_i = \sum_{j = 1}^k \y_j / k$ with $\y_j$ being the instance label that cannot be accessed and only $\weaky_i$ is available during training.
An example is shown in Figure~\ref{fig: Label Proportion example}.
We do not assume that the bags are non-overlapping 
while some existing work suffers from this limitation including \citet{NEURIPS2020_fcde1491}.
\subsection{Multiple Instance Learning}
\label{sec: MIL intro}
\emph{Multiple instance learning~(MIL)}~\citep{MIL1997framework, dietterich1997musk} refers to the scenario where the training dataset consists of bags of instances, and 
labels are provided at bag level.
However, in MIL, the bag label is a single binary label indicating whether there is a positive instance in the bag or not as opposed to a bag proportion defined in LLP.
A real-world application of MIL lies
in the field of drug activity \cite{dietterich1997musk}. 
We can observe the effects of a group of conformations but not for any specific molecule, motivating a MIL setting.
Formally, in MIL, the training dataset consists of $m$ bags, denoted by $\data = \{ (\bag_i, \weaky_i)\}_{i = 1}^m$, with a bag consisting of $k$ instances, i.e., $\bag_i = \{\x_j\}_{j = 1}^k$. The size $k$ can vary among different bags.
For each instance $\x_j$, there exists an instance-level label $\y_j$ which is not accessible. The bag-level label is defined as $\weaky_i = \max_{j}\{\y_j\}$.
An example is shown in Figure~\ref{fig: MIL example}.

The main goal of MIL is to learn a model that predicts a bag label
while a more challenging goal is to learn an instance-level predictor that is able to discover positive instances in a bag.
In this work, 
we aim to tackle both by training an instance-level classifier whose predictions can be combined into a bag-level prediction as the last step.

\begin{table*}[t]
\caption{A summary of the labels and objective functions for all the settings considered in the paper.}
\centering
\renewcommand{\arraystretch}{1.5}
\begin{adjustbox}{max width=\textwidth}
\begin{tabular}[t]{llll}
\toprule
\textbf{\textsc{Task}} & \textbf{\textsc{Label}}& 
\textbf{\textsc{Label Level}} & \textbf{\textsc{Objective}}
\\
\midrule
Classical Fully Supervised & Binary $\y$ & Instance Level & $-\y \log p(\y) - (1 - \y) \log (1 - p(\y))$ \\
Learning from Label Proportion & Continuous $\weaky = \sum_i \y_i / k$ & Bag Level & $- \log p(\sum \pred_i = k \weaky)$ \\
Multiple Instance Learning & Binary $\weaky = \max \{ y_i \}$ &  Bag Level & 
\makecell[l]{
$-\weaky \log p(\sum \pred_i \geq 1) - (1 - \weaky) \log p(\sum_i \pred_i = 0)$\\
}\\
\makecell[l]{Learning from Positive \\and Unlabeled Data}
& Binary $\weaky$ & Instance Level &  
\makecell[l]{
1) $\kl(\bin(k, \mixprop) \parallel p(\sum_i \pred_i))$ \\ 
2) $- \log p(\sum \pred_i = k \mixprop)$
} \\
\bottomrule
\end{tabular}
\end{adjustbox}
\label{tb: task comparison}
\end{table*}

\subsection{Learning from Positive and Unlabeled Data}
\label{sec: PU intro}

\emph{Learning from positive and unlabeled data} or \emph{PU learning}~\citep{PUoriginal, PUoriginal2}
refers to the setting where the training dataset consists of only positive instances and unlabeled data, and the unlabeled data can contain both positive and negative instances. 
A motivation of PU learning is persistence in the case of shifts to the negative-class distribution \citep{pmlr-v37-plessis15}, for example, a spam filter.
An attacker may alter the properties of a spam email, making a traditional classifier require a new negative dataset \citep{pmlr-v37-plessis15}. We note that taking a new unlabeled sample would be more efficient, motivating PU learning. 
Formally, in PU learning,
the training dataset $\data = \data_{\positive} \cup \data_{\unlabel}$
where $\data_{\positive} = \{(\x_i, \weaky_i = \poslabel)\}_{i = 1}^{n_{\positive}}$ is the set of positive instances 
with $\x_i$ from $p(\x \mid \y = \poslabel)$ and $\weaky$ denoting whether the instance is labeled,
and $\data_{\unlabel} = \{(\x_i, \weaky_i~=~\neglabel)\}_{i = 1}^{n_{\unlabel}}$ the unlabeled set
with $\x_i$ from 
\begin{equation}
\label{eq: mixture}
    p_{\unlabel}(\x) = \mixprop~p(\x \mid \y = \poslabel) + (1 - \mixprop)~p(\x \mid \y = \neglabel),
\end{equation}
where the mixture proportion $\mixprop := p(\y = \poslabel \mid \weaky = \neglabel)$ is the fraction of positive instances among the unlabeled population.
Although the instance label $\y$ is not accessible, its information can be inferred from the binary selection label $\weaky$:
if the selection label $\weaky = \poslabel$, it belongs to the positively labeled set, i.e., $p(\y = \poslabel \mid \weaky = \poslabel) = 1$;
otherwise,
the instance $\x$ can be either positive or negative. An example of such a dataset is shown in Figure~\ref{fig: PU example}.

The goal of PU learning is to train an instance-level classifier.
However, it is not straightforward to learn from PU data and it is necessary to make assumptions to enable learning with positive and unlabeled data~\citep{bekker2020learning}.
In this work, we make 
a commonly-used assumption for PU learning, \emph{selected completely at random~(SCAR)}, which lies at the basis of many PU learning methods.

\begin{definition}[\textbf{SCAR}]
\label{def: scar}
Labeled instances are selected completely at random, independent from input features, from the positive distribution $p(\x \mid \y = \poslabel)$,
that is, $p(\weaky = \poslabel \mid \x, \y = \poslabel) = p(\weaky = \poslabel \mid \y = \poslabel)$.
\end{definition}

\section{A Unified Approach: Count Loss }
\label{sec: A Unified Approach}
In this section, we derive objectives for the three weakly supervised settings, LLP, MIL, and PU learning, from first principles. 
Our proposed objectives bridge between neural outputs, which can be observed as counts, and arithmetic constraints derived from the weakly supervised labels.
The idea is to capture how close the classifier is to satisfying the arithmetic constraints on its outputs.
They can be easily integrated with 
deep learning models, and allow them to be trained end-to-end.
\begin{figure}
\begin{center}
\begin{minipage}[t]{1.0\textwidth}
\begin{algorithm}[H]
    \caption{Count Probability $p(\sum_{i = 1}^k \pred_i = \labelsum)$}
    \label{alg:cap}
    \textbf{Input: } 
    A set of $k$ log probabilities $\{ \logy_i \}_{i=1}^k$ with $\logy_i := \log p(\pred_i = 1)$, the number of instances $k$, and a label sum $\labelsum$ \\
    \textbf{Output: } log probabilities $\log p(\sum_{i = 1}^k \pred_i = \labelsum)$
    or a set of log probability $\{\log p(\sum_{i = 1}^k \pred_i = \labelsum)\}_{\labelsum=0}^k$
    \begin{algorithmic}
        \State \hspace{-1em}\Comment{$\dparray[i, m] = \log p(\sum_{j=1}^i \y_j = m)$ $\forall i$, $m$}
        \State Initialize an array $\dparray$ to be $-\mathsf{Inf}$ everywhere
        \State $\dparray[0, 0] = 0$ \Comment{$p(\sum_{j=1}^0 \y_j = 0) = 1$}
        \State Compute $\logy_i^\prime \gets \logmexp(\logy_i)$
        \hspace{-1em}\Comment{$\log p(\y_i = 0)$}
        \For {$i = 1$ \textbf{to} $k$} 
        \For {$m = 0$ \textbf{to} $\labelsum$}
            \State $a_+ = \dparray[i - 1, m - 1] + \logy_i$
            \State $a_- = \dparray[i - 1, m] + \logy_i^\prime$
            \State $\dparray[i, m] = \logsumexp(a_+, a_-)$
        \EndFor
        \EndFor
        \Return $\dparray[k, \labelsum]$ or $\dparray[k, :]$
    \end{algorithmic}
\end{algorithm}
\end{minipage}
\end{center}
\end{figure}
For the three objectives, we show that they share the same computational building block:
given $k$ instances $\{\x_i\}_{i = 1}^k$ and an instance-level classifier $\f$ that predicts $p(\pred_i \mid \x_i)$ with $\pred$ denoting the prediction variable,
the problem of inferring the probability of the constraint on counts $\sum_{i = 1}^k \pred_i = \labelsum$ is to compute the count probability defined below:
\begin{equation*}
p(\sum_{i = 1}^k \pred_i = \labelsum \mid \{\x_i\}_{i = 1}^k)
:= \sum_{\hat{\by} \in \outputspace^k} \id{\sum_{i = 1}^k \pred_i~=~\labelsum} \prod_{i = 1}^k p(\pred_i \mid \x_i)
\end{equation*}
where $\id{\cdot}$ denotes the indicator function and $\hat{\by}$ denotes the vector $(\pred_1, \cdots, \pred_k)$.
For succinctness, we omit the dependency on the input and simply write the count probability as $p(\sum_{i = 1}^k \pred_i = \labelsum)$.
Next, we show how the objectives derived from first principles can be solved by using the count probability as an oracle.
We summarize all proposed objectives 
in Table~\ref{tb: task comparison}.
Later, we will show how this seemingly intractable count probability can be efficiently computed by our proposed algorithm.

\textbf{LLP setting.}
Given a bag $\bag = \{\x_i\}_{i = 1}^k$ of size $k$ and its weakly supervised label $\weaky$,
by definition, it can be inferred that the number of positive instances (count) in the bag is $k\weaky$.
Our objective is to minimize the negative log probability $- \log p(\sum_i \pred_i = k \weaky)$.
Notice that when each bag consists of only one instance, that is, when the bag-level supervisions are reduced to instance-level ones, this objective is exactly cross-entropy loss.
We further show that our method is risk-consistent, that is, the optimal classifier under our proposed loss provides predictions consistent with the underlying risk as in the supervised learning setting. Details of the risk analysis can be found in Appendix~\ref{sec:proofs}.

\textbf{MIL setting.}
Given a bag $\bag = \{\x_i\}_{i = 1}^k$ of size $k$ and a single binary label $\weaky$ as its weakly supervised label,
we propose a cross-entropy loss as below
\begin{equation*}
    \loss(\bag, \weaky) = -\weaky \log p(\sum \pred_i \geq 1) - (1 - \weaky) \log p(\sum \pred_i = 0).
\end{equation*}
Notice that in the above loss, the probability term $p(\sum \pred_i = 0)$ is accessible to the oracle for computing count probability, and the other probability term $p(\sum \pred_i \geq 1)$ can simply be obtained from $1 - p(\sum \pred_i = 0)$, i.e., the same call to the oracle since all prediction variables $\pred_i$ are binary.

\textbf{PU Learning setting.}
Recall that for the unlabeled data $\data_{\unlabel}$ in the training dataset,
an unlabeled instance $\x_i$ is drawn from a mixture distribution as shown in Equation~\ref{eq: mixture} parameterized by a mixture proportion $\mixprop = p(\y = \poslabel \mid \weaky = \neglabel)$.
Under the SCAR assumption, even though only a class prior is given, we show that the mixture proportion can be estimated from the dataset.
\begin{proposition}
\label{prop: mixture proportion}
With SCAR assumption and a class prior $\alpha~:=~p(\y = \poslabel)$, the mixture proportion $\mixprop~:=~p(\y = \poslabel \mid \weaky = \neglabel)$ can be estimated from dataset $\data$.
\end{proposition}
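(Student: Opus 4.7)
The plan is to express $\mixprop$ in terms of two ingredients: the given class prior $\alpha$ and the marginal probability $p(\weaky = \poslabel)$, which can be estimated directly from the dataset by counting labeled versus unlabeled examples, i.e., $\widehat{p}(\weaky = \poslabel) = n_{\positive}/(n_{\positive} + n_{\unlabel})$. Once $\mixprop$ is written purely in terms of $\alpha$ and $p(\weaky=\poslabel)$, the claim follows immediately since both quantities are known or estimable.

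First, I would apply Bayes' rule to rewrite
\begin{equation*}
\mixprop = p(\y = \poslabel \mid \weaky = \neglabel) = \frac{p(\weaky = \neglabel \mid \y = \poslabel)\, p(\y = \poslabel)}{p(\weaky = \neglabel)} = \frac{(1 - c)\,\alpha}{1 - p(\weaky = \poslabel)},
\end{equation*}
where I have introduced the \emph{label frequency} $c := p(\weaky = \poslabel \mid \y = \poslabel)$, which under SCAR (Definition~\ref{def: scar}) is independent of $\x$ and hence well-defined as a single scalar.

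Next, I would eliminate $c$ using a consequence of the PU learning setup: since an instance can only be labeled if it is actually positive, we have $p(\weaky = \poslabel \mid \y = \neglabel) = 0$, and therefore
\begin{equation*}
p(\weaky = \poslabel) = p(\weaky = \poslabel \mid \y = \poslabel)\, p(\y = \poslabel) = c\,\alpha.
\end{equation*}
Solving gives $c = p(\weaky = \poslabel)/\alpha$, so substituting back yields the closed form
\begin{equation*}
\mixprop = \frac{\alpha - p(\weaky = \poslabel)}{1 - p(\weaky = \poslabel)}.
\end{equation*}
Since $\alpha$ is given as a prior and $p(\weaky = \poslabel)$ is estimated empirically from $\data$, this establishes the proposition.

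There isn't really a hard step here; the whole argument is essentially bookkeeping with Bayes' rule plus the observation that $\weaky = \poslabel$ deterministically implies $\y = \poslabel$. The only subtlety worth flagging in the write-up is making explicit where SCAR is used, namely in ensuring that $c$ is a constant (not a function of $\x$), so that $p(\weaky = \poslabel) = c\alpha$ holds without conditioning on features; otherwise one would need $\E_{\x \mid \y = \poslabel}[p(\weaky = \poslabel \mid \x, \y = \poslabel)]$ and could not pull the factor out cleanly.
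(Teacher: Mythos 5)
Your proposal is correct and follows essentially the same route as the paper: apply Bayes' rule to $\mixprop$, express the label frequency $c$ as $p(\weaky = \poslabel)/\alpha$ using the fact that labeled instances are necessarily positive, and observe that $p(\weaky = \poslabel)$ is estimable by counting labeled instances; your closed form $\frac{\alpha - p(\weaky=\poslabel)}{1 - p(\weaky=\poslabel)}$ is identical to the paper's $\frac{(1-c)\alpha}{1-\alpha c}$ after substituting $\alpha c = p(\weaky = \poslabel)$. Your explicit remark on where SCAR is needed (to make $c$ a constant independent of $\x$) is a nice touch that the paper leaves implicit.
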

\begin{proof}
First, the label frequency $p(\weaky = \poslabel \mid \y = \poslabel)$ denoted by $c$ can be obtained by
\begin{equation*}
    c = \frac{p(\weaky = \poslabel, \y = \poslabel)}{p(\y = \poslabel)} = \frac{p(\weaky = \poslabel)}{p(\y = \poslabel)} \textit{~~(by the definition of PU learning)}.
\end{equation*}
that is, $c = p(\weaky = \poslabel) / \alpha$. 
Notice that $p(\weaky = \poslabel)$ can be estimated from the dataset $\data$ by counting the proportion of the labeled instances.
Thus, we can estimate the mixture proportion as below,
\begin{equation*}
    \mixprop 
    = \frac{p(\weaky = \neglabel \mid \y = \poslabel)p(\y = \poslabel)}{p(\weaky = \neglabel)} 
    = \frac{(1 -  p(\weaky = \poslabel \mid \y = \poslabel))p(\y = \poslabel)}{1 - p(\weaky = \poslabel)} 
    = \frac{(1 - c)\alpha}{1 - \alpha c}.
\end{equation*}
\end{proof}

The probabilistic semantic of the mixture proportion is that if we randomly draw an instance $\x_i$ from the unlabeled population, 
the probability that the true label $\y_i$ is positive would be $\mixprop$.
Further, if we randomly draw $k$ instances, 
the distribution of the summation of the true labels $\sum_{i = 1}^k \y_i$ conforms to a binomial distribution $\bin(k, \mixprop)$ parameterized by the mixture proportion $\mixprop$, i.e.,
\begin{equation}
    p(\sum_{i = 1}^k \y_i = \labelsum)
    = \binom{k}{\labelsum} \mixprop^{\labelsum} (1 - \mixprop)^{k - \labelsum}.
\end{equation}
Based on this observation, we propose an objective to minimize the KL divergence between the distribution of predicted label sum and the binomial distribution parameterized by the mixture proportion for a random subset drawn from the unlabeled population, that is,
\begin{equation*}
\begin{split}
    &\kl\left(\bin(k, \mixprop)~\parallel~p(\sum_{i = 1}^k \pred_i) \right) = \sum_{\labelsum = 0}^{k}~\bin(\labelsum; k, \mixprop)~\log \frac{\bin(\labelsum; k, \mixprop)}{p(\sum_{i = 1}^k \pred_i = \labelsum)}
\end{split}
\end{equation*}
where $\bin(s; k, \mixprop)$ denotes the probability mass function of the binomial distribution $\bin(k, \mixprop)$.
Again, the KL divergence can be obtained by $k + 1$ calls to the oracle for computing count probability $p(\sum_{i = 1}^k \pred_i = \labelsum)$.
The KL divergence is further combined with a cross entropy defined over labeled data $\data_{\positive}$ as in the classical binary classification training as the overall objective.

As an alternative, we propose an objective for the unlabeled data that requires fewer calls to the oracle: instead of matching the distribution of the predicted label sum with the binomial distribution, this objective matches only the expectations of the two distributions, that is, to maximize $p(\sum_{i = 1}^k \pred_i = k\mixprop)$ where $k\mixprop$ is the expectation of the binomial distribution $\bin(k, \mixprop)$. We present empirical evaluations of both proposed objectives in the experimental section.

\section{Tractable Computation of Count Probability}

\begin{figure}[t]
 \begin{minipage}{\textwidth}
    \centering
    \includegraphics[width=.7\textwidth]{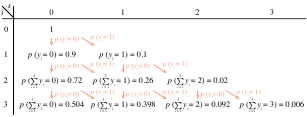}
    \caption{An example of how to compute the count probability in a dynamic programming manner.
    Assume that an instance-level classifier predicts three instances to have $p(\y_1 = 1) = 0.1$, $p(\y_2 = 1) = 0.2$, and $p(\y_3 = 1) = 0.3$ respectively. 
    The algorithm starts from the top-left cell and propagates the results down right.
    A cell has its probability $p(\sum_{j=0}^i \y_j = s)$
    computed by inputs from $p(\sum_{j=0}^{i-1} \y_j = s)$ weighted by $p(\y_i = 0)$, and $p(\sum_{j=0}^{i-1} \y_j = s - 1)$ weighted by $p(\y_i = 1)$ respectively, as indicated by the arrows.}
    \label{fig: alg example}
\end{minipage}
\end{figure}

In the previous section, we show how the count probability $p(\sum_{i = 1}^k \pred_i = \labelsum)$ serves as a computational building block for the objectives derived from first principles for the three weakly supervised learning settings.
With a closer look at the count probability, we can see that given a set of instances, the classifier predicts an instance-level probability for each and it requires further manipulation to obtain count information;
actually, the number of joint labelings for the set can be exponential in the number of instances.
Intractable as it seems, we show that it is indeed possible to derive a tractable computation for the count probability based on a result from~\citet{simple2023}.

\begin{proposition}
\label{eq: tractable constraint probability}
The count probability $p(\sum_{i = 1}^k \pred_i = \labelsum)$ of sampling $k$ prediction variables that sums to $\labelsum$ from an unconstrained distribution $p(\by) = \prod_{i = 1}^k p(\pred_i)$ can be computed exactly in time $\bigO(k\labelsum)$.
Moreover, 
the set $\{ p(\sum_{i = 1}^k \pred_i = \labelsum) \}_{s = 0}^{k}$ can also be computed in time $\bigO(k^2)$.
\end{proposition}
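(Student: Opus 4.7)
The plan is to verify both the correctness and the complexity of Algorithm~\ref{alg:cap}. First, I would fix the intended semantics of the table as $\dparray[i, m] = \log p(\sum_{j=1}^i \pred_j = m)$, noting that the initialization $\dparray[0, 0] = 0$ and $\dparray[0, m] = -\infty$ for $m \neq 0$ encodes the empty-sum event $p(\sum_{j=1}^0 \pred_j = 0) = 1$ together with the fact that positive sums are impossible from no variables.

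Next I would prove correctness by induction on $i$. The inductive step is a one-line case split on $\pred_i \in \{0, 1\}$, which yields
\[
p\!\left(\sum_{j=1}^i \pred_j = m\right) = p(\pred_i = 1)\, p\!\left(\sum_{j=1}^{i-1} \pred_j = m-1\right) + p(\pred_i = 0)\, p\!\left(\sum_{j=1}^{i-1} \pred_j = m\right),
\]
where the factorization across the $\pred_j$ uses precisely the independence hypothesis $p(\by) = \prod_j p(\pred_j)$. Taking logarithms and collapsing the two summands with \logsumexp{} reproduces the inner-loop update exactly, with $\logy_i' = \logmexp(\logy_i)$ supplying $\log p(\pred_i = 0)$. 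The only boundary subtlety is the $m = 0$ case, where $\dparray[i-1, m-1]$ must behave as probability zero; this is absorbed by the $-\infty$ initialization so that the $a_+$ summand drops out of the \logsumexp{}.

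For the runtime, I would simply tally operations: the outer loop runs $k$ times, the inner loop $\labelsum + 1$ times, and each cell performs a constant number of arithmetic, \logmexp{}, and two-argument \logsumexp{} calls, giving $\bigO(k\labelsum)$. The second claim follows by instantiating $\labelsum = k$: filling $\dparray[k, :]$ then yields all $k+1$ target probabilities in $\bigO(k^2)$ time.

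There is no serious technical obstacle; the conceptual content is entirely that independence turns the Poisson-binomial sum into a linear convolutional recursion over partial sums, which is the classical DP template. The only care points are bookkeeping: the $m = 0$ boundary and the log-domain arithmetic, which is used for numerical stability rather than correctness.
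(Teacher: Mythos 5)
Your proof is correct and takes essentially the same route as the paper: the Poisson-binomial dynamic program over partial sums of independent predictions, with correctness by induction on the case split over $\pred_i \in \{0,1\}$ and the set version obtained by reading off the final row of the table. The only difference is presentational --- the paper's appendix defers the inductive argument to Proposition~1 of the cited prior work and only spells out the caching step, whereas you write out the recursion, the $m = 0$ boundary, and the operation count yourself, which makes the argument self-contained.
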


The above proposition can be proved in a constructive way where we show that the count probability $p(\sum_{i = 1}^k \pred_i = \labelsum)$ can be computed in a dynamic programming manner.
We provide an illustrative example of this computation in Figure~\ref{fig: alg example}.
In practice, we implement this computation in log space for numeric stability which we summarized as Algorithm~\ref{alg:cap}, where function $\logmexp$ provides a numerically stable way to compute $\logmexp(x) = \log(1 - \exp(x))$ and function $\logsumexp$ a numerically stable way to compute $\logsumexp(x, y) = \log(\exp(x) + \exp(y))$.
Notice that since we show it is tractable to compute the set $\{ p(\sum_{i = 1}^k \pred_i = \labelsum) \}_{s = 0}^{k}$,
for any two given label sum $\labelsum_1$ and $\labelsum_2$,
a count probability $p(\labelsum_1 \leq \sum_{i} \pred_i \leq \labelsum_2)$ where the count lies in an interval, can also be exactly and tractably computed.
This implies that our tractable computation of count probabilities can potentially be leveraged by other count-based applications besides the three weakly supervised learning settings in the last section.

\section{Related Work}
\textbf{Weakly Supervised Learning.\ } Besides settings explored in our work there are many other weakly-supervised settings. One of which is semi-supervised learning, a close relative to PU Learning with the difference being that labeled samples can be both positive and negative \citep{zhu2022introduction, zhu2005semi}. Another is label noise learning, which occurs when our instances are mislabeled. Two common variations involve whether noise is independent or dependent on the instance \citep{frenay2013classification, song2022learning}. A third setting is partial label learning, where each instance is provided a set of labels of which exactly one is true \citep{cour2011learning}. An extension of this is partial multi-label learning, where among a set of labels, a subset is true~\citep{xie2018partial}.

\textbf{Unified Approaches.\ } There exists some literature in regards to ``general" approaches for weakly supervised learning. One example being the method proposed in \citet{hullermeier2014learning}, which provides a procedure that minimizes the empirical risk on ``fuzzy'' sets of data. The paper also establishes guarantees for model identification and instance-level recognition. Co-Training and Self-Training are also examples of similar techniques that are applicable to a wide variety of weakly supervised settings \cite{cotraining1998, unsupervisedsup1995}. Self-training involves progressively incorporating more unlabeled data via our model’s prediction (with pseudo-label) and then training a model on more data as an iterative algorithm \cite{karamanolakis2021self}. Co-Training leverages two models that have different “views” of the data and iteratively augment each other's training set with samples they deem as “well-classified”. They are traditionally applied to semi-supervised learning but can extend to multiple instance learning settings \cite{cotraining2011lu, xu2013survey, liu2023multiple}.

\textbf{LLP.\ }
\citet{quadrianto2008llp} first introduced an exponential family based approach that used an estimation of mean for each class. Others seek to minimize ``empirical proportion risk'' or EPR as in \citet{yu2014learning}, which is centered around creating an instance-level classifier that is able to reproduce the label proportions of each bag. 
As mentioned previously, more recent methods use bag posterior approximation and neural-based approaches~\citep{ardehaly2017co, tsai2020learning}. One such method is Proportion Loss~(PL) \citep{tsai2020learning}, which we contrast to our approach. This is computed by binary cross entropy between the averaged instance-level probabilities and ground-truth bag proportion.

\textbf{MIL.}
MIL finds its earlier approaches with SVMs, which have been used quite prolifically and still remain one of the most common baselines. We start with MI-SVM/mi-SVM~\citep{Andrews2002SupportVM} which are examples of transductive SVMs \citep{carbonneau2018multiple} that seek a stable instance classification through repeated retraining iterations. MI-SVM is an example of an instance space method \citep{carbonneau2018multiple}, which identifies methods that classify instances as a preliminary step in the problem. This is in contrast to bag-space or embedded-space methods that omit the instance classification step. Furthermore, \citet{wang2018revisiting} remains one of the hallmarks of the use of neural networks for Multi-Instance Learning. \citet{ilse2018attention}, utilize a similar approach but with attention-based mechanisms.

\textbf{PU learning.}
\citet{bekker2020learning} groups PU Learning paradigms into three main classes: two step, biased, and class prior incorporation. Biased learning techniques train a classifier on the entire dataset with the understanding that negative samples are subject to noise \citep{bekker2020learning}. We will focus on a subset of biased learning techniques~(Risk Estimators) as they are considered state-of-the-art and relevant to us as baselines. The Unbiased Risk Estimator~(uPU) provides an alternative to the inefficiencies in manually biasing unlabeled data~\citep{plessis2014unbiased, pmlr-v37-plessis15}. Later, Non-negative Risk Estimator~(nnPU) \cite{kiryo2017positive} accounted for weaknesses in the unbiased risk estimator such as overfitting. 

\textbf{Count Loss.\ } To our knowledge, viewing the computation of the ``bag posterior'' as \emph{probabilistic} is new. However, the prior approaches do this implicitly. Many approaches have tried to approximate the ``bag posterior'' by averaging the instance-level probabilities in a bag \citep{ardehaly2017co, tsai2020learning}. In MIL settings, among instance-level approaches, the MIL-pooling is an implicit ``bag posterior'' computation. These include mean, max, and log-sum-exp pooling to approximate the likelihood that a bag has at least one positive instance \cite{wang2018revisiting}. But again, these are all approximations of what our computation does \emph{exactly}. In PU Learning, to our best knowledge, the view of unlabeled data as a bag annotated with the mixture proportion is new.

\textbf{Neuro-Symbolic Losses.\ } In this paper, we have dealt with a specific form of distributional constraint.
Conversely, there has been a plethora of work exploring the integration of \emph{hard} symbolic constraints
into the learning of neural networks.
This can take the form of enforcing a hard constraint~\citep{AhmedNeurIPS22}, whereby the network's
predictions are guaranteed to satisfy the pre-specified constraints. Or it can take the form of a soft constraint~\citep{xu2018semantic, manhaeve2018, AhmedArxiv21b, Ahmed2022neuro, AhmedAAAI22, AhmedAISTATS23} whereby the network is trained with an additional loss term that penalizes the network for placing any probability mass on predictions that violate the constraint.
While in this work we focus on discrete linear inequality constraints defined over binary variables, there is existing work focusing on hybrid linear inequality constraints defined over both discrete and continuous variables and their tractability~\citep{belle2015probabilistic,ZengTPM21,zeng2020probabilistic}. The development of inference algorithms for such constraints and their applications such as Bayesian deep learning remain an active topic~\citep{zeng2019efficient,kolb2019pywmi,zeng2020scaling, zeng2023collapsed}.

\begin{table*}[t]
    \caption{LLP results across different bag sizes. We report the mean and standard deviation of the test AUC over $5$ seeds for each setting. The highest metric for each setting is shown in \textbf{boldface}.
    }
    
    \centering
    \title{Label Proportion Results with $L1$.}\label{tab:data}
    \renewcommand{\arraystretch}{1.1}
    \resizebox{\textwidth}{!}{%
    \footnotesize
    \small
    \begin{tabular}{lllllll}
      \toprule %
      \bfseries Dataset & \bfseries Dist & \bfseries Method & 8 & 32 & 128 & 512 \\
      \midrule %
      Adult & $[0, \frac{1}{2}]$ & PL & $0.8889 \pm 0.0024$ & $0.8782 \pm 0.0036$ & $\textbf{0.8743} \pm \textbf{0.0039}$ & $0.8678 \pm 0.0085$\\
      Adult & $[0, \frac{1}{2}]$ & LMMCM & $0.8728 \pm 0.0019$&$0.8693\pm0.0047$&$0.8669\pm0.0041$&$0.8674\pm0.0040$\\
      Adult & $[0, \frac{1}{2}]$ & \oursprincipled & $\textbf{0.8984} \pm \textbf{0.0013}$ & $\textbf{0.8848} \pm \textbf{0.0041}$ & $\textbf{0.8743} \pm \textbf{0.0052}$ & $\textbf{0.8703} \pm \textbf{0.0070}$\\
      \midrule %
      Adult & $[\frac{1}{2}, 1]$ & PL & $0.8781 \pm 0.0038$ & $0.8731 \pm 0.0035$ & $\textbf{0.8699} \pm \textbf{0.0057}$ & $0.8556 \pm 0.0180$\\
      Adult & $[\frac{1}{2}, 1]$ & LMMCM & $0.8584 \pm 0.0164$&$0.8644\pm0.0052$&$0.8601\pm0.0045$&$0.8500\pm 0.0186$\\
      Adult & $[\frac{1}{2}, 1]$ & \oursprincipled & $\textbf{0.8854} \pm \textbf{0.0022}$ & $\textbf{0.8738} \pm \textbf{0.0039}$ & $0.8675 \pm 0.0043$ & $\textbf{0.8607} \pm \textbf{0.0056}$\\
      \midrule
      Adult & $[0, 1]$ & PL & $0.8884 \pm 0.0030$ & $0.8884 \pm 0.0008$ & $\textbf{0.8879} \pm \textbf{0.0025}$ & $\textbf{0.8828} \pm \textbf{0.0051}$\\
      Adult & $[0, 1]$ & LMMCM & $0.8831\pm 0.0026$&$0.8819\pm0.0006$&$0.8821\pm0.0017$&$0.8786\pm0.0052$\\
      Adult & $[0, 1]$ & \oursprincipled & $\textbf{0.8985} \pm \textbf{0.0010}$ & $\textbf{0.8891} \pm \textbf{0.0013}$ & $0.8871 \pm 0.0021$ & $0.8790 \pm 0.0056$\\
      \midrule %
      Magic & $[0, \frac{1}{2}]$ & PL & $0.8900 \pm 0.0095$ & $0.8510 \pm 0.0032$ & $0.8405 \pm 0.0110$ & $0.8332 \pm 0.0149$\\
      Magic & $[0, \frac{1}{2}]$ & LMMCM & $0.8918 \pm 0.0077$&$0.8799\pm0.0113$&$0.8753\pm0.0157$& $0.8734\pm 0.0092$\\
      Magic & $[0, \frac{1}{2}]$ & \oursprincipled & $\textbf{0.9088} \pm \textbf{0.0056}$ & $\textbf{0.8830} \pm \textbf{0.0097}$ & $\textbf{0.8926} \pm \textbf{0.0049}$ & $\textbf{0.8864} \pm \textbf{0.0107}$\\
      \midrule %
      Magic & $[\frac{1}{2}, 1]$ & PL & $0.9066 \pm 0.0016$ & $0.8818 \pm 0.0108$ & $0.8769 \pm 0.0101$ & $0.8429 \pm 0.0443$\\
      Magic & $[\frac{1}{2}, 1]$ & LMMCM & $0.8911 \pm 0.0083$&$0.8790\pm0.0091$&$0.8684\pm0.0046$&$0.8567\pm 0.0292$\\
      Magic & $[\frac{1}{2}, 1]$ & \oursprincipled & $\textbf{0.9105} \pm \textbf{0.0020}$ & $\textbf{0.8980} \pm \textbf{0.0059}$ & $\textbf{0.8851} \pm \textbf{0.0255}$ & $\textbf{0.8816} \pm \textbf{0.0083}$\\
      \midrule %
      Magic & $[0, 1]$ & PL & $0.9039 \pm 0.0029$ & $0.8870 \pm 0.0037$ & $0.9002 \pm 0.0092$ & $0.8807 \pm 0.0200$\\
      Magic & $[0, 1]$ & LMMCM & $0.9070 \pm 0.0026$&$0.9048 \pm 0.0058$&$0.9113 \pm 0.0058$&$0.8934 \pm 0.0097$\\
      Magic & $[0, 1]$ & \oursprincipled & $\textbf{0.9173} \pm \textbf{0.0018}$ & $\textbf{0.9102} \pm \textbf{0.0057}$ & $\textbf{0.9146} \pm \textbf{0.0051}$ & $\textbf{0.9088} \pm \textbf{0.0039}$\\
      \bottomrule %
    \end{tabular}}
\label{tab: LLP main}
\end{table*}

\section{Experiments}
\label{sec: experiments}

In this section, we present a thorough empirical evaluation of our proposed count loss on the three weakly supervised learning problems, \emph{LLP}, \emph{MIL}, and \emph{PU learning}.\footnote{Code and experiments are available at \url{https://github.com/UCLA-StarAI/CountLoss}}
We refer the readers to the appendix for additional experimental details.

\subsection{Learning from Label Proportions}

We experiment on two datasets: 
1) \emph{Adult} with $8192$ training samples where the task is to predict whether a person makes over $50k$ a year or not given personal information as input;
2) \emph{Magic Gamma Ray Telescope} with $6144$ training samples where the task is to predict whether the electromagnetic shower is caused by primary gammas or not given information from the atmospheric Cherenkov gamma telescope \citep{Dua:2019}.\footnote{Publicly available at \url{archive.ics.uci.edu/ml}}

We follow \citet{NEURIPS2020_fcde1491} 
where two settings are considered: one with label proportions uniformly on $[0, \frac{1}{2}]$ and the other uniformly on $[\frac{1}{2}, 1]$.
Additionally, we experiment on a third setting with label proportions distributing uniformly on $[0, 1]$ which is not considered in \citet{NEURIPS2020_fcde1491} but is the most natural setting since the label proportion is not biased toward either $0$ or $1$.
We experiment on four bag sizes $n \in \{8, 32, 128, 512\}$. 

Count loss~(CL) denotes our proposed approach using the loss objective defined in Table~\ref{tb: task comparison} for LLP. 
We compare our approach with a mutual contamination framework for LLP (LMMCM)~\citep{NEURIPS2020_fcde1491}
and against Proportion Loss~(PL)~\citep{tsai2020learning}.

\begin{table*}[t]
    \caption{MIL experiment on the MNIST dataset. Each block represents a different distribution from which we draw bag sizes---First Block: $\mathcal{N}(10, 2)$, Second Block: $\mathcal{N}(50, 10)$, Third Block: $\mathcal{N}(100, 20)$. We run each experiment for $3$ runs and report mean test AUC with standard error. The highest metric for each setting is shown in \textbf{boldface}.
    }
    \centering
    \small
    \begin{adjustbox}{max width=1.0\textwidth}
    \begin{tabular}{lllllllll}
      \toprule %
      \bfseries Training Bags & \bfseries $50$ & \bfseries $100$ & \bfseries $150$ &\bfseries $200$ &\bfseries $300$ &\bfseries $400$ &\bfseries $500$  \\
      \midrule %
      Gated Attention & $0.775 \pm 0.034$ & $0.894 \pm 0.012$ & $0.935 \pm 0.005$ & $0.939 \pm 0.006$ & $\textbf{0.963} \pm \textbf{0.002}$ & $0.959 \pm 0.002$ & $\textbf{0.966} \pm \textbf{0.003}$\\
      Attention & $0.807 \pm 0.026$ & $\textbf{0.913} \pm \textbf{0.006}$ & $\textbf{0.940} \pm \textbf{0.004}$ & $0.942 \pm 0.007$ & $0.957 \pm 0.002$ & $0.961 \pm 0.005$ & $0.965 \pm 0.004$\\
      \oursprincipled& $\textbf{0.818 }\pm \textbf{0.024}$ & $0.906 \pm 0.009$ & $0.929 \pm 0.005$ & $\textbf{0.946} \pm \textbf{0.001}$ & $0.952 \pm 0.004$ & $\textbf{0.962} \pm \textbf{0.002}$ & $0.963 \pm 0.002$\\
      \midrule %
      Gated Attention & $\textbf{0.943} \pm \textbf{0.005}$ & $0.949 \pm 0.009$ & $\textbf{0.970} \pm \textbf{0.005}$ & $\textbf{0.977} \pm \textbf{0.001}$ & $0.983 \pm 0.002$ & $0.986 \pm 0.004$ & $\textbf{0.987} \pm \textbf{0.002}$\\
      Attention & $0.936 \pm 0.010$ & $\textbf{0.962} \pm \textbf{0.006}$ & $\textbf{0.970} \pm \textbf{0.001}$ & $\textbf{0.977} \pm \textbf{0.002}$ & $0.981 \pm 0.002$ & $\textbf{0.987} \pm \textbf{0.001}$ & $\textbf{0.987} \pm \textbf{0.002}$\\
      \oursprincipled & $0.939 \pm 0.010$ & $0.960 \pm 0.002$ & $0.964 \pm 0.007$ &$0.972 \pm 0.002$ & $\textbf{0.982} \pm \textbf{0.003}$ & $0.982 \pm 0.001$ & $\textbf{0.987} \pm \textbf{0.002}$\\
      \midrule %
      Gated Attention & $0.975 \pm 0.003$ & $0.981 \pm 0.004$ & $0.992 \pm 0.002$ & $0.987 \pm 0.004$ & $\textbf{0.996} \pm \textbf{0.001}$ & $\textbf{0.998} \pm \textbf{0.001}$ & $0.990 \pm 0.004$\\
      Attention & $\textbf{0.984} \pm \textbf{0.001}$ & $0.982 \pm 0.001$ & $\textbf{0.996} \pm \textbf{0.000}$ & $0.987 \pm 0.007$ & $0.992 \pm 0.004$ & $0.994 \pm 0.002$ & $0.998 \pm 0.000$\\
      \oursprincipled & $0.981 \pm 0.007$ & $\textbf{0.989} \pm \textbf{0.000}$ & $\textbf{0.996} \pm \textbf{0.002}$ & $\textbf{0.995} \pm \textbf{0.001}$ & $\textbf{0.996} \pm \textbf{0.002}$ & $0.993 \pm 0.003$ & $\textbf{0.999} \pm \textbf{0.001}$\\
      \bottomrule %
    \end{tabular}
    \end{adjustbox}
\label{tab:MIL results}
\end{table*}

\begin{table*}[t]
    \caption{MIL: We report mean test accuracy, AUC, F$1$, precision, and recall averaged over $5$ runs with std.\ error on the Colon Cancer dataset. The highest value for each metric is shown in \textbf{boldface}.
    }
    \centering
    \begin{adjustbox}{max width=1.0\textwidth}
    \small
    \begin{tabular}{lllllll}
      \toprule %
      \bfseries Method & \bfseries Accuracy & \bfseries AUC & \bfseries F$1$  & \bfseries Precision  &\bfseries Recall \\
      \midrule %
      Gated Attention& $0.909 \pm 0.014$ & $0.908 \pm 0.013$ & $0.886 \pm 0.021$ & $0.916 \pm 0.020$ & $0.879 \pm 0.020$\\
      Attention& $0.893 \pm 0.015$ & $0.890 \pm 0.008$ & $0.876 \pm 0.017$ & $0.908 \pm 0.016$ & $0.879 \pm 0.018$\\
      \midrule
      \oursprincipled & $\textbf{0.915} \pm \textbf{0.008}$ & $\textbf{0.912} \pm \textbf{0.010}$ & $\textbf{0.903} \pm \textbf{0.010}$ & $\textbf{0.936} \pm \textbf{0.014}$ & $\textbf{0.898} \pm \textbf{0.007}$\\
      \bottomrule %
    \end{tabular}
    \end{adjustbox}
\label{tab:MIL2}
\end{table*}

\textbf{Results and Discussions} We show our results in Table \ref{tab: LLP main}. Our method showcases superior results against the baselines on both datasets and variations in bag sizes. Especially in cases with lower bag sizes, \ie $8, 32$, CL greatly outperforms all other methodologies.
Among our baselines are methods that approximate the bag posterior~(PL), which we show to be less effective than optimizing the exact bag posterior with CL. 

\begin{figure}
  \centering
  \raggedleft
  \includegraphics[width=0.47\textwidth]{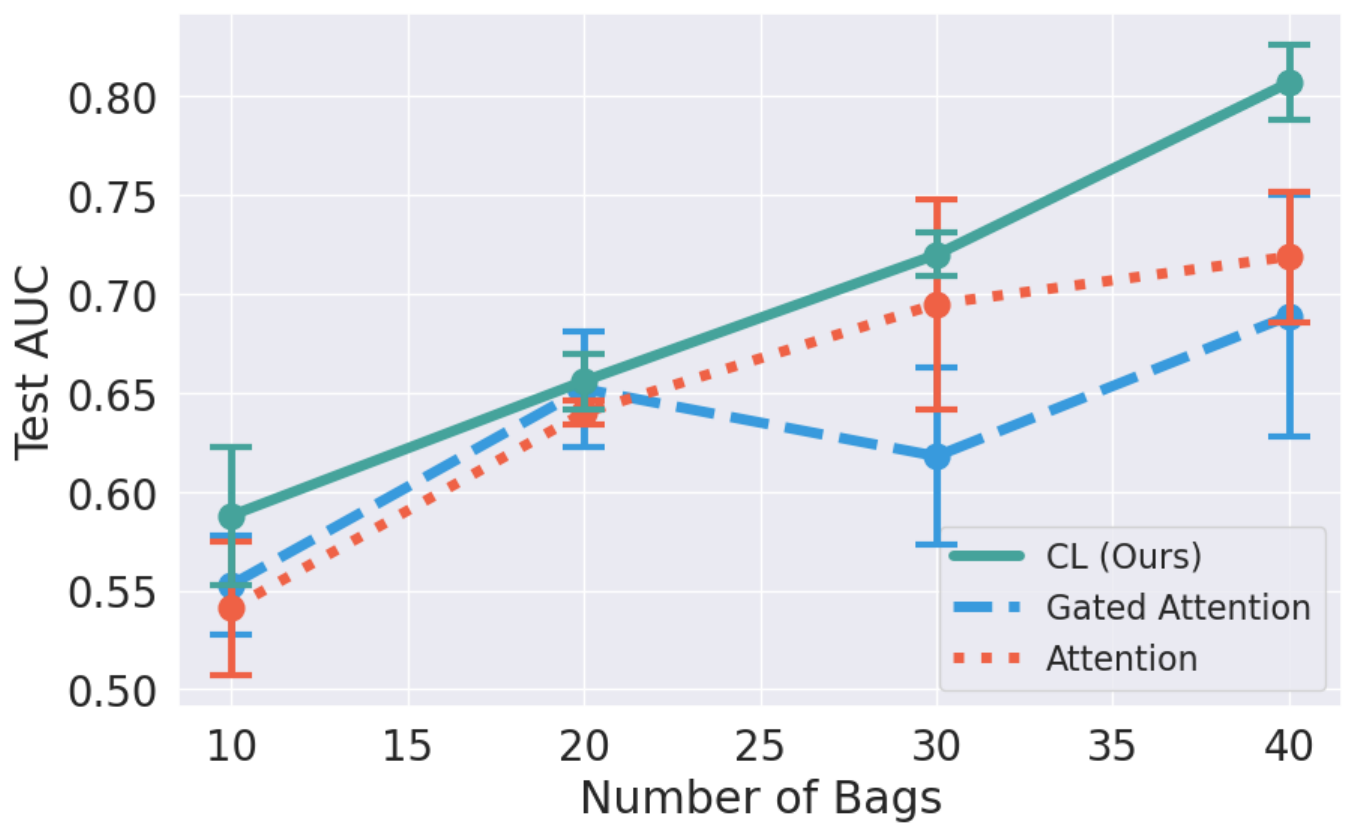}
  \hfill
  \includegraphics[width=0.50\textwidth]{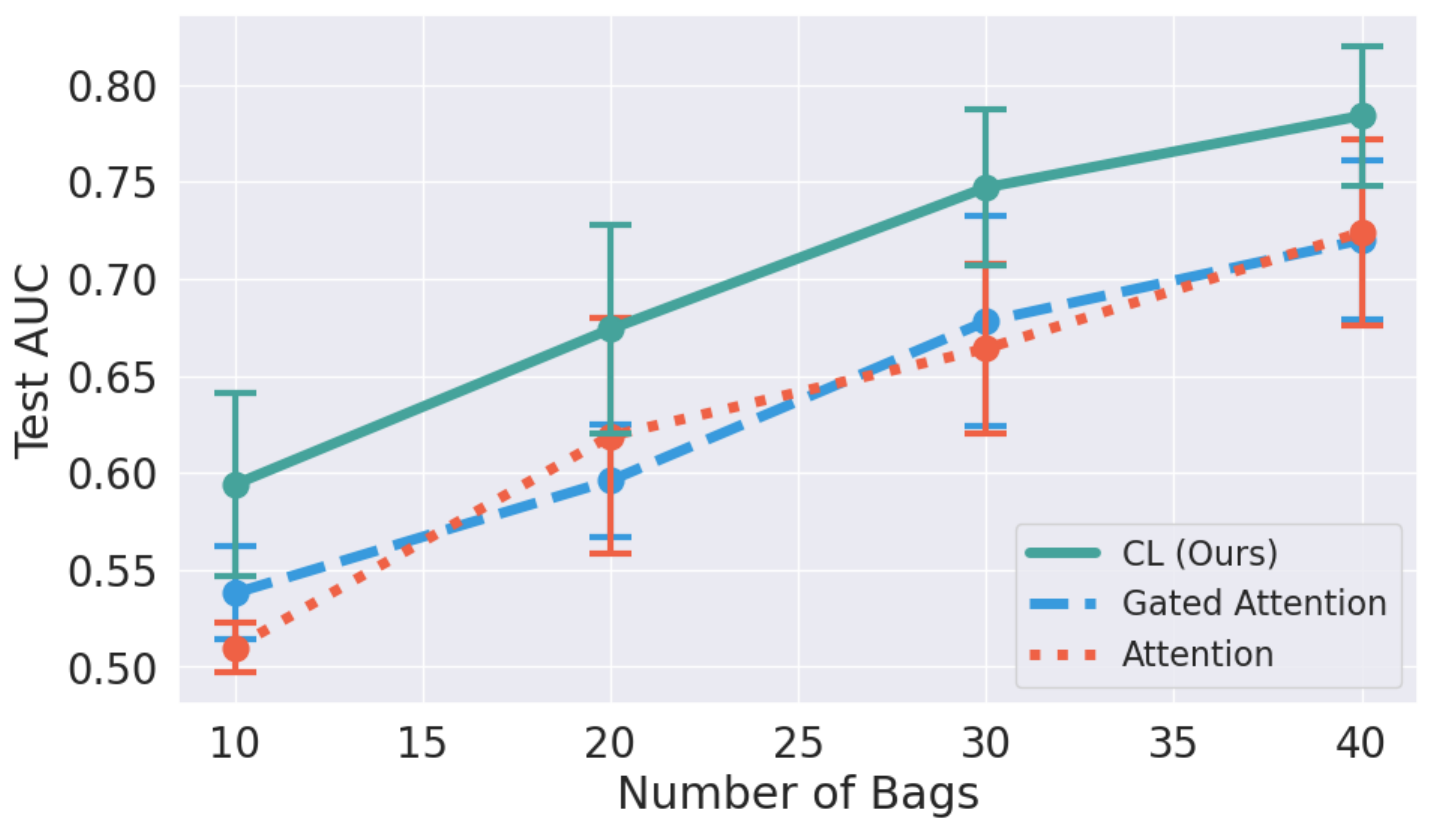}
  \caption{MIL MNIST dataset experiments with decreased numbers of training bags and lower bag size. Left: bag sizes sampled from $\mathcal{N}(10, 2)$; Right: bag sizes sampled from $\mathcal{N}(5, 1)$. We plot the mean test AUC~(aggregated over $3$ trials) with standard errors for $4$ bag sizes.
  Best viewed in color. \vinay{Not sure what the change is here.}   
  }
  \label{fig: MIL lowered bags}
    \vspace{-1em}
\end{figure}

\subsection{Multiple Instance Learning}

We first experiment on the MNIST dataset~\citep{lecun1998mnist}
and follow the MIL experimental setting in \citet{ilse2018attention}:
the training and test set bags are randomly sampled from the MNIST training and test set respectively; 
each bag can have
images of digits from $0$ to $9$,
and bags with the digit $9$ are labeled positive.
Moreover, the dataset is constructed in a balanced way such that there is
an equal amount of positively and negatively labeled bags as in~\citet{ilse2018attention}.
The task is to train a classifier that is able to predict bag labels; the more challenging task is to \emph{discover key instances}, that is, to train a classifier that identifies images of digit $9$. Following~\citet{ilse2018attention},
we consider three settings that vary in the bag generation process:
in each setting, bags have their sizes generated from a normal distribution being $\mathcal{N}(10, 2), \mathcal{N}(50, 10), \mathcal{N}(100, 20)$ respectively.
The number of bags in training set $n$ is in $\{50, 100, 150, 200, 300, 400, 500\}$. Thus, we have $3 \times 7 = 21$ settings in total.
Additionally,
we introduce experimental analysis on \emph{how the performance of the learning methods would degrade as the number of bags and total samples in training set decreases},
by modulating the number of training bags $n$ to be $\{10, 20, 30, 40\}$ and selecting bag sizes from $\mathcal{N}(5, 1)$ and $\mathcal{N}(10, 2)$.
\begin{wrapfigure}{r}{0.58\textwidth}
\centering
\begin{adjustbox}{max width=0.58\columnwidth}
\begin{minipage}[t]{\columnwidth}
\centering
\begin{minipage}{0.09\columnwidth}
\centering
\includegraphics[width=\columnwidth]{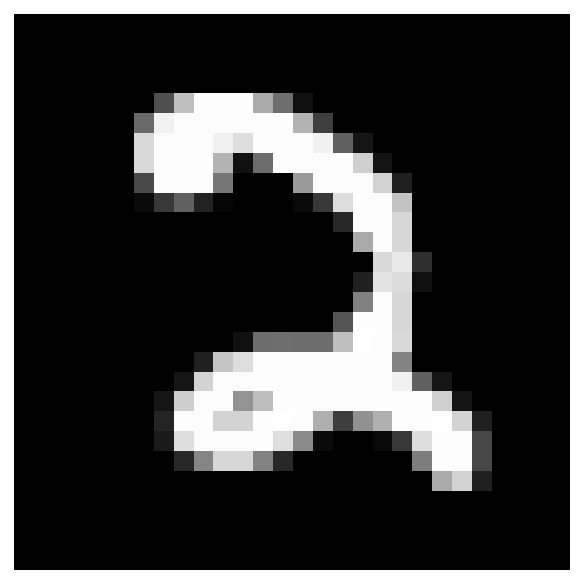} \\
\large $< 10^{-6}$
\end{minipage}
\hfill
\fbox{
\begin{minipage}{0.09\columnwidth}
\centering
\includegraphics[width=\columnwidth]{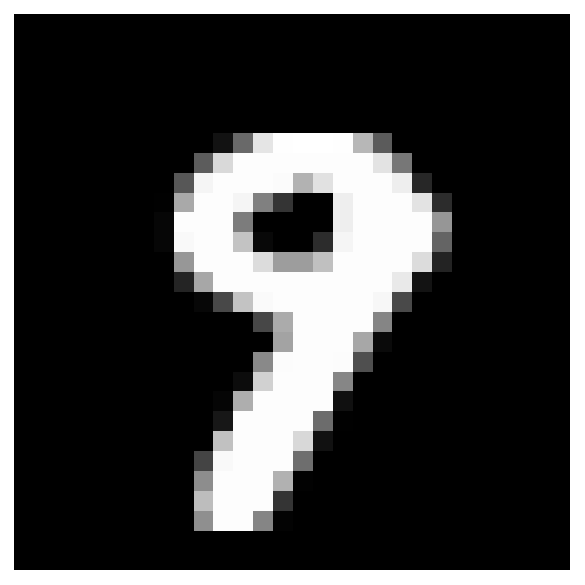} \\
\large $0.9997$
\end{minipage}
}
\hfill
\begin{minipage}{0.09\columnwidth}
\centering
\includegraphics[width=\columnwidth]{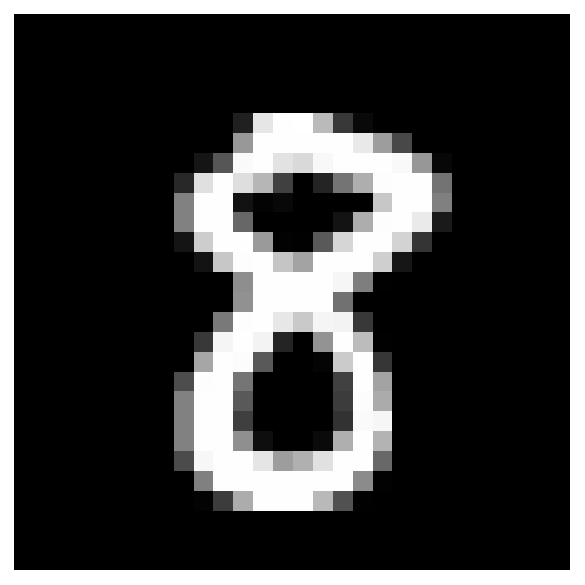} \\
\large $< 10^{-6}$
\end{minipage}
\hfill
\begin{minipage}{0.09\columnwidth}
\centering
\includegraphics[width=\columnwidth]{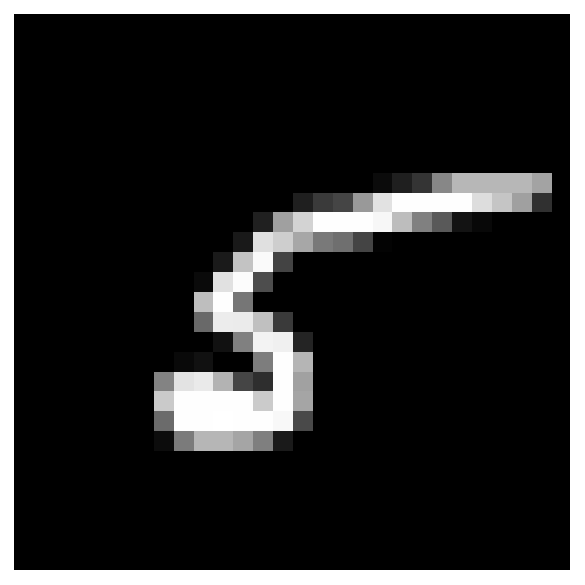} \\
\large $< 10^{-6}$
\end{minipage}
\hfill
\begin{minipage}{0.10\columnwidth}
\centering
\includegraphics[width=\columnwidth]{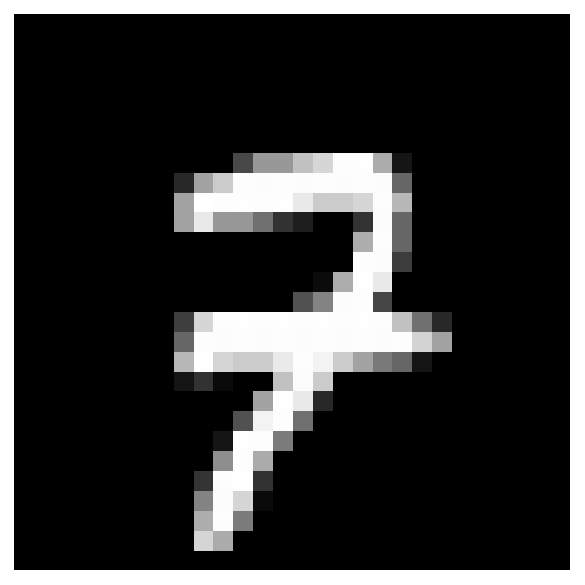} \\
\large $6\times10^{-6}$
\end{minipage}
\hfill
\begin{minipage}{0.09\columnwidth}
\centering
\includegraphics[width=\columnwidth]{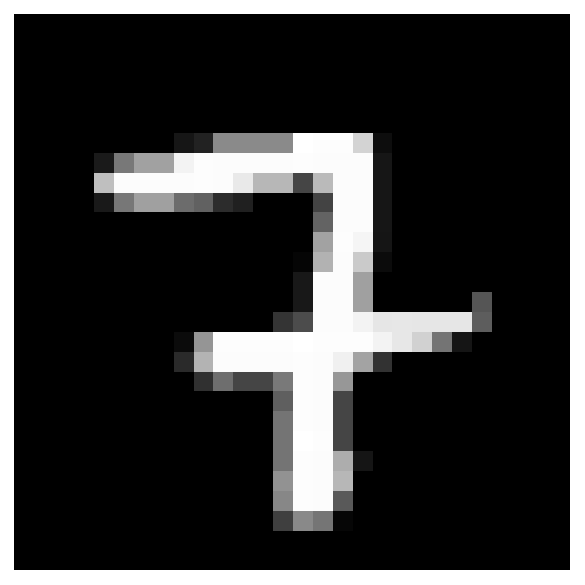} \\
\large $< 10^{-6}$
\end{minipage}
\hfill
\begin{minipage}{0.09\columnwidth}
\centering
\includegraphics[width=\columnwidth]{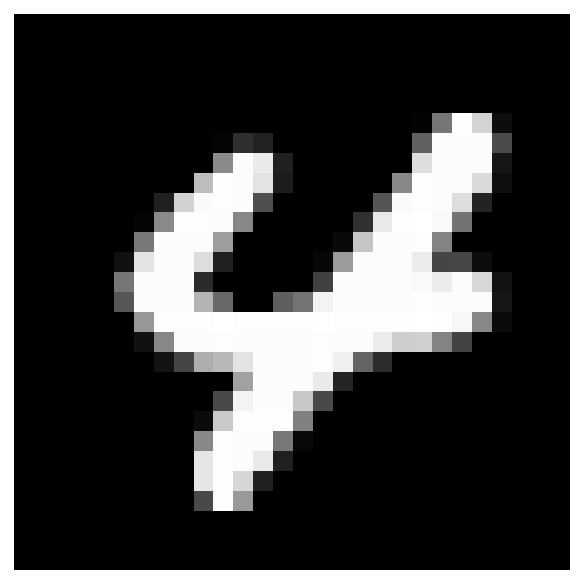} \\
\large $< 10^{-6}$
\end{minipage}
\\[10pt]
\begin{minipage}{0.09\columnwidth}
\centering
\includegraphics[width=\columnwidth]{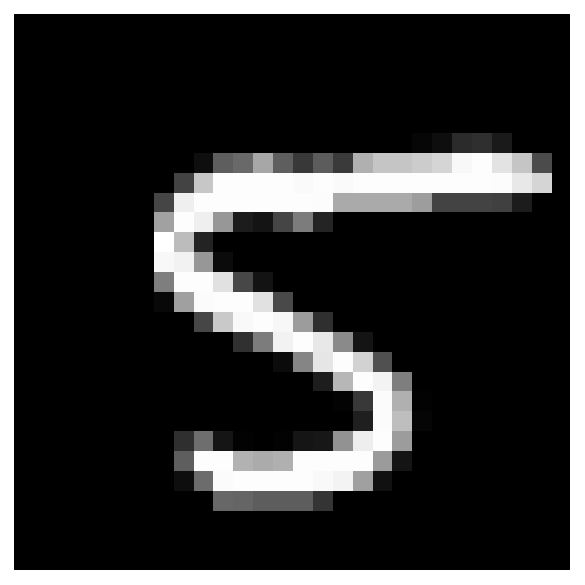} \\
\large $< 10^{-6}$
\end{minipage}
\hfill
\begin{minipage}{0.09\columnwidth}
\centering
\includegraphics[width=\columnwidth]{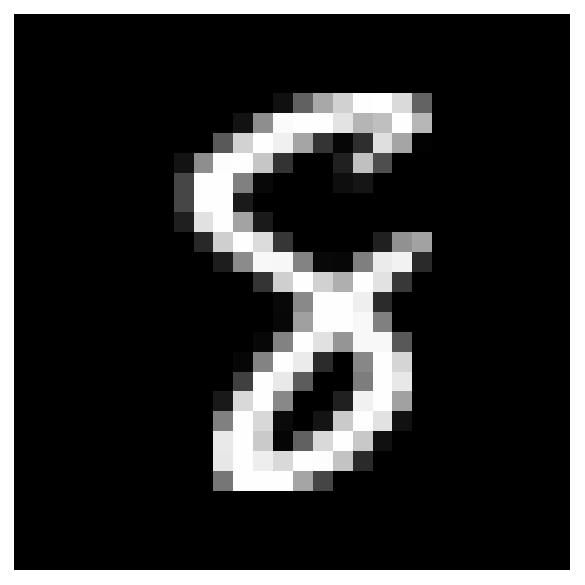} \\
\large $< 10^{-6}$
\end{minipage}
\hfill
\begin{minipage}{0.09\columnwidth}
\centering
\includegraphics[width=\columnwidth]{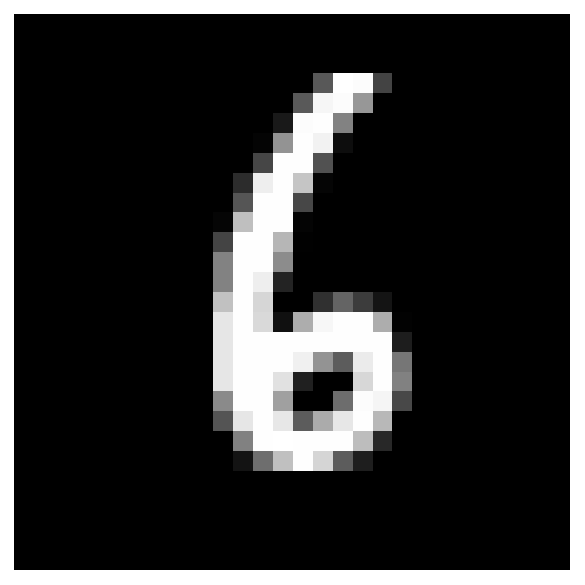} \\
\large $< 10^{-6}$
\end{minipage}
\hfill
\begin{minipage}{0.09\columnwidth}
\centering
\includegraphics[width=\columnwidth]{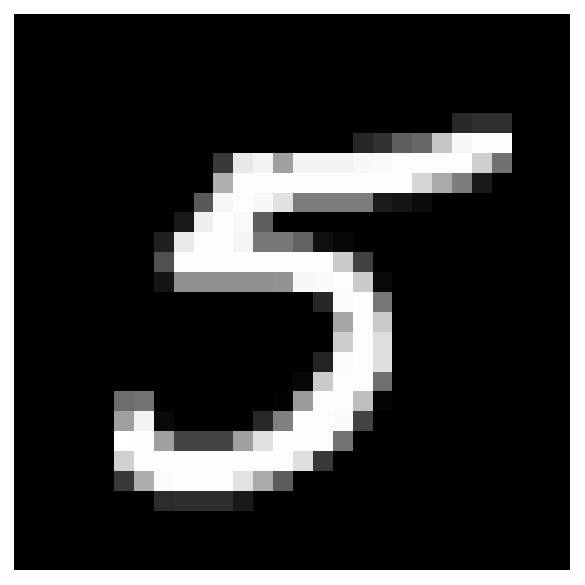} \\
\large $< 10^{-6}$
\end{minipage}
\hfill
\setlength{\fboxrule}{1.5pt}
\setlength{\fboxsep}{0.5pt}
\fbox{
\begin{minipage}{0.09\columnwidth}
\centering
\includegraphics[width=\columnwidth]{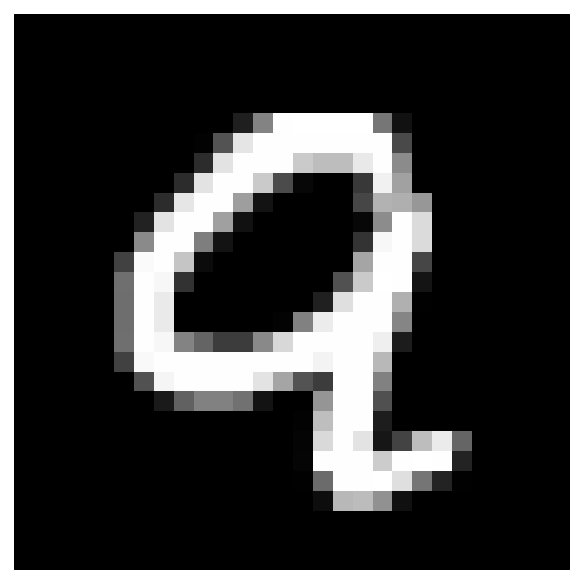} \\
\large $1.0000$
\end{minipage}
}
\hfill
\begin{minipage}{0.09\columnwidth}
\centering
\includegraphics[width=\columnwidth]{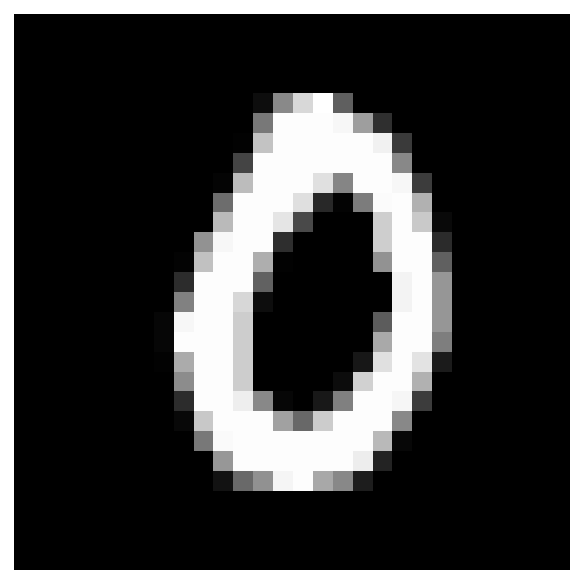} \\
\large $< 10^{-6}$
\end{minipage}
\end{minipage}
\end{adjustbox}
\caption{A test bag from our MIL experiments, where we set only the digit $9$ as a positive instance. 
Highlighted in red are digits identified to be positive with corresponding probability beneath.
}
\label{fig: MIL mnist}
\vspace{-1.5em}
\end{wrapfigure}

We also experiment on the Colon Cancer dataset \citep{2016colon} to simulate a setting where bag instances are not independent. The dataset consists of $100$ total hematoxylin-eosin~(H\&E) stained images, each of which contains images of cell nuclei that are classified as one of: epithelial, inflammatory, fibroblast, and miscellaneous. Each image represents a bag and instances are $27 \times 27$ patches extracted from the original image. A positively labeled bag or image is one that contains the epithelial nuclei.
For both datasets, we include the Attention and Gated Attention mechanism~\citep{ilse2018attention} as baselines. We also use the MIL objective defined in Table \ref{tb: task comparison}.

\textbf{Results and Discussions} 
For the MNIST experiments, CL is able to outperform all other baselines or exhibit highly comparable performance for bag-level predictions as shown in Table~\ref{tab:MIL results}. 
A more interesting setting is to compare how robust the learning methods are if the number of training bags decreases.
\citet{wang2018revisiting} claim that instance-level classifiers tend to lose against embedding-based methods. However, we show in our experiment that this is not true in all cases as seen in Figure~\ref{fig: MIL lowered bags}.
While Attention and Gated Attention are based on embedding,
they suffer from a more severe drop in predictive performance than CL when the number of training bags drops from $40$ to $10$;
our method shows great robustness and consistently outperforms all baselines. The rationale we provide is that with a lower number of training instances, we need more supervision over the limited samples we have. Our constraint provides this additional supervision, which accounts for the difference in performance.

We provide an additional investigation~in Figure~\ref{fig: MIL mnist} to show that our approach learns effectively and delivers accurate instance-level predictions under bag-level supervision. In Figure~\ref{fig: MIL mnist}, we can see that even though the classifier is trained on feedback about whether a bag contains the digit $9$ or not, it accurately discovers all images of digit $9$. To reinforce this, Table \ref{tab:MIL results instance} and Table \ref{tab:MIL results instance 2}, in Appendix \ref{appendix: MIL instance}, show that our approach outperforms existing instance-space methods on instance-level classification.

Our experimental results on the Colon Cancer dataset are shown in Table~\ref{tab:MIL2}. We show that both our proposed objectives are able to consistently outperform baseline methods on all metrics. 
Interestingly, we do not expect CL to perform well when instances in a bag are dependent; however, the results indicate that our count loss is robust to these settings.

\begin{table*}[t]
    \caption{PU Learning: We report accuracy and standard deviation on a test set of unlabeled data, which is aggregated over $3$ runs. The results from CVIR, nnPU, and uPU are aggregated over $10$ epochs, as in \citet{garg2021mixture}, while we choose the single best epoch based on validation for our approaches. The highest metric for each setting is shown in \textbf{boldface}.
    }
    \centering
    \begin{adjustbox}{max width=1.0\textwidth}
    \begin{tabular}{llllllll}
      \toprule %
      \bfseries Dataset & \bfseries Network & \bfseries \oursexpected & \bfseries \oursprincipled  & \bfseries CVIR  &\bfseries nnPU &\bfseries nPU  \\
      \midrule %
      Binarized MNIST & MLP & $95.9 \pm 0.15$ & $\textbf{96.4} \pm \textbf{0.01}$ & $96.3 \pm 0.07$ & $96.1 \pm 0.14$ & $95.2 \pm 0.19$\\
      MNIST17 & MLP & $98.7 \pm 0.17$ & $\textbf{99.0} \pm \textbf{0.19}$ & $98.7 \pm 0.09$ & $98.4 \pm 0.20$ & $98.4 \pm 0.09$\\
      Binarized CIFAR & ResNet & $79.2 \pm 0.27$ & $80.1 \pm 0.34$ & $\textbf{82.3} \pm \textbf{0.18}$ & $77.2 \pm 1.03$ & $76.7 \pm 0.74$\\
      CIFAR Cat vs. Dog & ResNet & $\textbf{76.5} \pm \textbf{1.86}$& $74.8 \pm 1.64$ & $73.3 \pm 0.94$ &  $71.8 \pm 0.33$ & $68.8 \pm 0.53$\\
      \bottomrule %
    \end{tabular}
    \end{adjustbox}
\label{tab:PU}
\end{table*}

\begin{wrapfigure}{r}{0.5\textwidth}
\vspace{-0.5em}
  \centerline{
  \begin{minipage}{0.5\textwidth}
  \centerline{
  \includegraphics[width=0.9\textwidth, trim= 30 0 0 0]
  {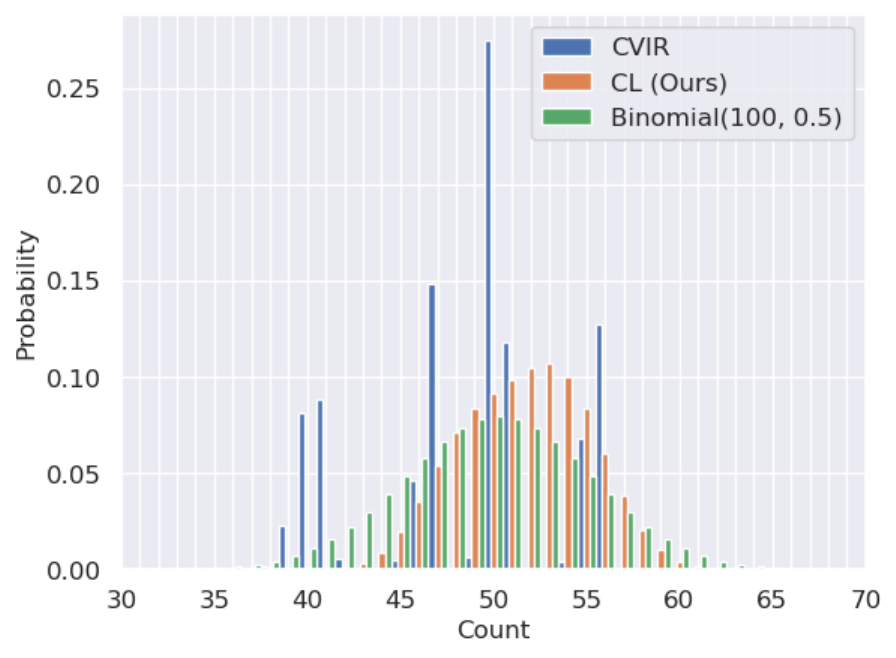}
  }
  \caption{MNIST17 setting for PU Learning: We compute the average discrete distribution for CL and CVIR, over $5$ test bags, each of which contain $100$ instances. A ground truth binomial distribution of counts is also shown.
  }
    \label{fig:PU Dist}

  \end{minipage}
  }
  \vspace{-1.5em}
\end{wrapfigure}

\vspace{0.8em}
\subsection{Learning from Positive and Unlabeled Data}
We experiment on dataset MNIST and CIFAR-10 \citep{2009CIFAR}, following the four simulated settings from \citet{garg2021mixture}:
1) Binarized MNIST: the training set consist of images of digits $0-9$ and images with digits in range $[0, 4]$ are positive instances while others as negative;
2) MNIST17: the training set consist of images of digits $1$ and $7$ and images with digit $1$ are defined as positive while $7$ as negative;
3) Binarized CIFAR: the training set consists of images from ten classes and images from the first five classes is defined as positive instances while others as negative;
4) CIFAR Cat vs. Dog: the training set consist of images of cats and dogs and images of cats are defined as positive while dogs as negative.
The mixture proportion is $0.5$ in all experiments. 
The performance is evaluated using the accuracy on a test set of unlabeled data.

As shown in Table~\ref{tb: task comparison}, we propose two objectives for PU learning. Our first objective is denoted by CL whereas the second approach is denoted by CL-expect.
We compare against the Conditional Value Ignoring Risk approach~(CVIR)~\cite{garg2021mixture},
nnPU~\cite{kiryo2017positive}, and uPU~\cite{pmlr-v37-plessis15}.

\textbf{Results and Discussions} Accuracy results are presented in Table \ref{tab:PU} where we can see that
our proposed methods perform better than baselines on $3$ out of the $4$ simulated PU learning settings. 
CL-expect builds off a similar ``exactly-k'' count approach, which we have shown to work well in the label proportion setting. 
The more interesting results are from CL
where we fully leverage the information from a distribution as supervision instead of simply using the expectation.
We think of this as applying a loss on each count weighted by their probabilities from the binomial distribution. 
We provide further evidence that our proposed count loss effectively guides the classifier towards predicting a binomial distribution
as shown in Figure \ref{fig:PU Dist}: 
we plot the count distributions predicted by CL and CVIR as well as the ground-truth binomial distribution.
We can see that CL is able to generate the expected distribution, proving the efficacy of our approach.

\section{Conclusions}
In this paper, we present a unified approach to several weakly-supervised tasks, i.e., LLP, MIL, PU. We construct our approach based on the idea of using weak labels to constrain count-based probabilities computed from model outputs. A future direction for our work can be to extend to multi-class classification as well as explore the applicability to other weakly-supervised settings, e.g. label noise learning, semi-supervised learning, and partial label learning \cite{cour2011partial, natarajan2011, zhu2005semi}. 

\section*{Acknowledgments}
We would like to thank Yuhang Fan for helpful discussions.
This work was funded in part by the DARPA PTG Program under award HR00112220005, the DARPA ANSR program under award FA8750-23-2-0004, 
NSF grants \#IIS-1943641, \#IIS-1956441, \#CCF-1837129, and a gift from RelationalAI. GVdB discloses a financial interest in RelationalAI. ZZ is supported by an Amazon Doctoral Student Fellowship.

\bibliography{references}
\clearpage
\appendix
\section{Proofs}
\label{sec:proofs}
\begin{lemma}
\label{thm: lemma}
Let $\risk_{\llp}$ be our risk estimator defined over $p(\x, \weaky)$ as
$\risk_{\llp}(\f) = \frac{1}{k(k + 1)} \E_{p(\x^k, \weaky)}[\loss(\f(\x), \by)]$.
Following the assumptions in Section~3.1 from \citet{kobayashi2022risk},
our proposed method is risk-consistent.
\end{lemma}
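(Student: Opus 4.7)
The plan is to instantiate the general risk-consistency template from Section~3.1 of \citet{kobayashi2022risk} by casting our bag-level count loss as a particular member of their family of bag-losses and then verifying, one by one, the assumptions they require. Their template takes a bag loss of the form $\loss(\f(\bag),\weaky)$, assumes the bag is generated i.i.d.\ instance-by-instance from $p(\x,\y)$ conditional on its observed weak label, and concludes risk-consistency whenever the loss is (a) a proper scoring rule for the bag-level conditional $p(\weaky \mid \x^k)$ and (b) admits a decomposition that matches the normalization constant they impose (which in our case should produce the $\frac{1}{k(k+1)}$ factor arising from averaging over $k$ instance positions and $k+1$ possible count values).

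First, I would rewrite the bag loss $-\log p(\sum_i \pred_i = k\weaky)$ using the factorization $p(\hat{\by}\mid \x^k)=\prod_i p(\pred_i\mid \x_i)$ and the count-probability formula from Proposition~\ref{eq: tractable constraint probability}, to make the dependence on the instance-level classifier $\f$ explicit. Second, I would push the bag-level expectation $\E_{p(\x^k,\weaky)}$ through the LLP generative process by conditioning on the latent joint labeling $\by$ and using $\weaky=\tfrac{1}{k}\sum_j \y_j$, so that $\E_{p(\x^k,\weaky)}[\loss]$ becomes $\E_{p(\x^k,\by)}[-\log p(\sum_i\pred_i=\sum_i \y_i)]$. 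The i.i.d.\ assumption then lets me reduce the bag expectation to a symmetric sum of per-instance expectations, which is precisely the setting in which Kobayashi's normalization $1/(k(k+1))$ is derived.

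Third, and this is the technical core, I would verify the properness condition: the count-based negative log-likelihood is a strictly proper scoring rule for the induced distribution on $\sum_i \y_i$, and composed with the tractable map $\{p(\pred_i=1\mid \x_i)\}_i \mapsto p(\sum_i\pred_i=\cdot)$ it remains proper on the instance-level Bernoulli parameters because this map is injective on the symmetric statistics that appear in the risk. Consequently the unique minimizer of $\risk_{\llp}(\f)$ over all measurable $\f$ assigns $f(\x)=p(\y=\poslabel\mid \x)$, matching the fully-supervised Bayes classifier. Once these three facts are in hand, risk-consistency is immediate from the cited theorem.

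The hard part will be step three: verifying properness/identifiability of the \emph{composed} loss. Instance-level cross-entropy is the canonical proper scoring rule, but its count-probability lift couples all $k$ classifier outputs through a nonlinear PMF, so I cannot simply quote properness. I would handle this by showing that for any two instance-level conditional distributions differing on a positive-measure set of $\x$, the induced count PMFs differ on a positive-measure set of bags (leveraging distinctness of covariates within a bag together with the polynomial structure of the count PMF in the Bernoulli parameters), and then invoking strict properness of the log-loss on the resulting count distributions. Care is also needed to handle the edge case in which two instances in the same bag share identical covariates, where identifiability holds only up to exchange within that orbit but still pins down $p(\y=\poslabel\mid \x)$ pointwise almost everywhere.
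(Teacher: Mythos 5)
Your plan and the paper's proof diverge at the very first move, and the divergence matters. The paper does not verify properness or identifiability of the composed count loss at all. Instead it establishes a short algebraic identity: Kobayashi et al.'s risk-consistent rewrite $\risk_{\mathit{rc}}$ weights the loss on each joint labeling $\by \in \outputspace^k$ by $\prod_j p(\y_j\mid\x_j) \big/ \sum_{\by' : \sum_j \y'_j = \weaky} \prod_j p(\y'_j\mid\x_j)$, which is recognized as the conditional probability $p(\by \mid \sum_j \y_j = \weaky, \x^k)$; choosing the per-bag loss to be the \emph{likelihood} loss $\loss(\f(\x^k),\by) = -\,p(\sum_j \y_j = k\weaky \mid \x^k)$ then makes the inner sum over $\by$ collapse by marginalization to $-\,p(\sum_j \y_j = k\weaky\mid\x^k)$, giving $\risk_{\mathit{rc}}(\f) = \risk_{\llp}(\f)$ exactly. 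Risk-consistency is then inherited wholesale from Kobayashi et al.'s theorem about $\risk_{\mathit{rc}}$; nothing about Bayes-optimality of the minimizer needs to be argued. Note also that the lemma is stated for the likelihood loss, not the log-likelihood $-\log p(\sum_i \pred_i = k\weaky)$ you work with, and the $\tfrac{1}{k(k+1)}$ factor is simply carried over from Kobayashi's Equation~1 rather than derived from an averaging argument.

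Beyond taking a different route, your proposal has a genuine gap at its own core step. First, ``risk-consistent'' in the sense of the cited framework means the weak-supervision risk estimator coincides with a valid rewrite of the supervised classification risk; you are instead trying to prove a Fisher-consistency statement (the population minimizer equals $p(\y=\poslabel\mid\x)$), which is a different claim and is not what the lemma asserts. Second, the injectivity argument you defer to is not merely technical: the count PMF is a symmetric polynomial in the $k$ Bernoulli parameters, so strict properness of the bag-level log-loss pins down at most the multiset $\{p(\pred_i=1\mid\x_i)\}_i$ per bag, and upgrading ``identified up to permutation within each bag'' to ``$f(\x)=p(\y=\poslabel\mid\x)$ pointwise a.e.'' requires assumptions on the bag-sampling distribution (richness of covariate combinations across bags) that neither the paper nor Section~3.1 of Kobayashi et al.\ provides. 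You acknowledge this difficulty but do not resolve it, so as written the argument does not go through; the paper's marginalization identity sidesteps it entirely.
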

\begin{proof}
In~\citet{kobayashi2022risk},
it is shown that the risk $\risk$ in classical multi-class classification can be reduced to a risk $\risk_{\mathit{rc}}$ over $p(\x^k, \weaky^k)$ as shown in Equation~1 in \citet{kobayashi2022risk} under certain assumptions.

Consider binary classification and follow our notations, we rewrite the Equation~1 in \citet{kobayashi2022risk} as below,
\begin{equation*}
\begin{split}
    \risk_{\mathit{rc}}
    &(\f) = 
    \frac{1}{k(k + 1)} \E_{p(\x^k, \weaky)} \\
    &\sum_{\by \in \outputspace^k} 
    \frac{\prod_{j = 1}^k p(\y_j \mid \x_j)}{\sum\limits_{\by^\prime \in \outputspace^k, \sum_j \y_j^\prime = \weaky} \prod_{j=1}^k p(\y_j^\prime \mid \x_j)}
    ~\loss(\f(\x^k), \by)
\end{split}
\end{equation*}
We notice that the weight term attached to the loss can be further rewritten as a constrained probability as follows,
\begin{equation*}
    \frac{\prod_{j = 1}^k p(\y_j \mid \x_j)}{\sum\limits_{\by^\prime \in \outputspace^k, \sum_j \y_j^\prime = \weaky} \prod_{j=1}^k p(\y_j^\prime \mid \x_j)} 
    = p(\by \mid \sum_{j = 1}^k \y_j = \weaky, \x^k)
\end{equation*}

This allows us to further rewrite the risk $\risk_{\mathit{rc}}$ with likelihood loss being 
$\loss(\f(\x^k), \by) = - p(\sum_{j = 1}^k \y_j = k\weaky \mid \x^k)$:
\begin{align*}
    &\risk_{\mathit{rc}}
    (\f)
    = \frac{1}{k(k + 1)} \E_{p(\x^k, \weaky)} \\
    &\left[- \sum_{\by \in \outputspace^k} 
    p(\by \mid \sum_{j = 1}^k \y_j = k\weaky, \x^k)
    p(\sum_{j = 1}^k \y_j = k\weaky \mid \x^k) \right] \\
    &= \frac{1}{k(k + 1)} \E_{p(\x^k, \weaky)}
    \left[- \sum_{\by \in \outputspace^k} 
    p(\by, \sum_{j = 1}^k \y_j = k\weaky \mid \x^k) \right] \\
    &= \frac{1}{k(k + 1)} \E_{p(\x^k, \weaky)}
    \left[- p(\sum_{j = 1}^k \y_j = k\weaky \mid \x^k)\right] \\
    &= \frac{1}{k(k + 1)} \E_{p(\x^k, \weaky)} [\loss(\f(\x^k), \by)]
    = \risk_{\llp}(\f)
\end{align*}
The last few lines follow from the definition of conditional probabilities.
This shows that the risk $\risk_{rc}(\f) = \risk_{\llp}(\f)$, meaning that the reduction from risk $\risk_{rc}(\f)$ to the classical risk $\risk(\f)$ in \citet{kobayashi2022risk} is applicable to our risk estimator $\risk_{\llp}$,
which proves that our learning method is risk-consistent.
\end{proof}

\begin{proposition}
\emph{
Assume that the loss function $\loss(\f(\x), \y)$ is $\rho$-Lipschitz with respect to $\f(\x)$ for any $\y \in \outputspace$ bounded by some constant.
Let $\f_{\llp}$ be the hypothesis that minimizes the empirical risk, and $\f^*_{\llp}$ is the hypothesis that minimizes the true risk,
then $\f_{\llp}$ converges to $\f^*_{\llp}$ as $m \rightarrow \infty$.
}
\end{proposition}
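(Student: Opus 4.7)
The plan is to establish the result via a standard uniform convergence argument over the hypothesis class, combined with the risk-consistency guarantee from Lemma~\ref{thm: lemma}. Let $\hat{\risk}_{\llp}$ denote the empirical counterpart of $\risk_{\llp}$ formed from $m$ i.i.d.\ bag samples $\{(\x^k_i, \weaky_i)\}_{i=1}^m$ drawn from $p(\x^k, \weaky)$. Writing the usual excess risk decomposition,
\begin{equation*}
\risk_{\llp}(\f_{\llp}) - \risk_{\llp}(\f^*_{\llp})
\leq 2 \sup_{\f \in \mathcal{F}} |\hat{\risk}_{\llp}(\f) - \risk_{\llp}(\f)|,
\end{equation*}
so it suffices to show that the right-hand side vanishes in probability as $m \to \infty$.

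First, I would bound the uniform deviation by the Rademacher complexity of the induced loss class $\mathcal{L} = \{(\x^k, \weaky) \mapsto \loss(\f(\x^k), \by) : \f \in \mathcal{F}\}$, using the standard symmetrization argument. Because the loss is bounded (by assumption) and the samples are i.i.d.\ at the bag level, McDiarmid's inequality gives a concentration bound in terms of $\mathfrak{R}_m(\mathcal{L})$. Second, I would use the $\rho$-Lipschitz property of $\loss$ with respect to $\f(\x)$ together with Talagrand's contraction lemma to peel the loss off, yielding $\mathfrak{R}_m(\mathcal{L}) \leq c \rho \cdot \mathfrak{R}_m(\mathcal{F})$ for a constant $c$ depending only on the bag size $k$. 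Under the mild regularity assumption that $\mathfrak{R}_m(\mathcal{F}) \to 0$ as $m \to \infty$ (which holds for standard parametric hypothesis classes at rate $\bigO(1/\sqrt{m})$), this establishes uniform convergence.

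Putting the pieces together, by Lemma~\ref{thm: lemma} the risk $\risk_{\llp}$ is a reweighted version of the classical risk, and uniform convergence of $\hat{\risk}_{\llp}$ to $\risk_{\llp}$ implies $\risk_{\llp}(\f_{\llp}) - \risk_{\llp}(\f^*_{\llp}) \to 0$ in probability. Combined with risk-consistency, this gives the desired convergence $\f_{\llp} \to \f^*_{\llp}$.

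The main obstacle I anticipate is handling the contraction step carefully, since the loss $\loss(\f(\x^k), \by)$ depends jointly on $k$ instance-level predictions rather than a single scalar output. The cleanest route is to view the composite count-probability loss as $\rho'$-Lipschitz in the vector $(\f(\x_1), \dots, \f(\x_k))$ with $\rho'$ depending on $\rho$ and $k$, then apply a vector-valued contraction inequality; accounting for the bag size $k$ in the resulting constants is where the argument is most delicate, but it only affects the convergence rate, not the asymptotic conclusion.
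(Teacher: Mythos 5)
Your argument is sound in outline, but it takes a noticeably different and more self-contained route than the paper. The paper's proof is a two-line reduction: having shown in Lemma~\ref{thm: lemma} that $\risk_{\llp}(\f) = \risk_{\mathit{rc}}(\f)$ for every $\f$, it observes that the excess risk $\risk_{\llp}(\f_{\llp}) - \risk_{\llp}(\f^*_{\llp})$ coincides with the excess risk of the reduced estimator of \citet{kobayashi2022risk}, and then simply invokes their Theorem~3.1, which already gives an $\bigO(1/\sqrt{m})$ estimation error bound under the boundedness and $\rho$-Lipschitz hypotheses. You instead re-derive that estimation error bound from scratch: the excess-risk decomposition into twice the uniform deviation, McDiarmid plus symmetrization to pass to Rademacher complexity, and a vector-valued contraction step to peel off the $k$-argument loss. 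These are precisely the ingredients inside the cited theorem (which is why the Lipschitz and boundedness assumptions appear in the proposition's statement), so your sketch buys transparency --- the reader sees exactly where each hypothesis is used and where the bag size $k$ enters the constants --- at the cost of redoing work the paper outsources, and it does require the extra (standard but unstated) assumption that $\mathfrak{R}_m(\mathcal{F}) \to 0$. Two smaller points: your closing appeal to ``risk-consistency'' is not actually needed for the convergence claim, since both $\f_{\llp}$ and $\f^*_{\llp}$ are defined with respect to the same risk $\risk_{\llp}$; the lemma's role in the paper is only to transport the problem into the setting where the cited bound applies. And what both arguments literally deliver is $\risk_{\llp}(\f_{\llp}) - \risk_{\llp}(\f^*_{\llp}) \to 0$, i.e.\ convergence of risks; passing to convergence of the hypotheses themselves needs additional structure (e.g.\ uniqueness of the minimizer), a gap your sketch shares with the paper's own proof.
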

\begin{proof}
This claim immediately follows Lemma~\ref{thm: lemma},
where we shows that $\risk_{\mathit{rc}}(\f) = \risk_{\llp}(\f)$.
Therefore, it holds that $\risk_{\llp}(\hat{\f}) - \risk_{\llp}(\f^*) = \risk_{\mathit(sc)}(\hat{\f}) - \risk_{\mathit(sc)}(\f^*)$,
where the latter term, an always positive term, is shown in Theorem~3.1 in \citet{kobayashi2022risk} that it converges to $0$ at rate $\sqrt{m}$.
\end{proof}

\textbf{Proposition~\ref{eq: tractable constraint probability}}
\emph{
The count probability $p(\sum_{i = 1}^k \pred_i = \labelsum)$ of sampling $k$ prediction variables with summation being $\labelsum$ from an unconstrained distribution $p(\by) = \prod_{i = 1}^k p(\pred_i)$ can be computed exactly in time $\bigO(k\labelsum)$.
Moreover, 
the set $\{ p(\sum_{i = 1}^k \pred_i = \labelsum) \}_{s = 0}^k$ can also be computed in time $\bigO(k^2)$.
}

\begin{proof}
The claim that $p(\sum_{i = 1}^k \pred_i = \labelsum)$ can be computed exactly in time $\bigO(k\labelsum)$ follows immediately from Proposition~1 in \citet{simple2023}:
in \citet{simple2023}, the unconstrained distribution is a factorized distribution obtained from $k$ outputs from a single neural network model while in our case, the unconstrained distribution $p(\by)$ is obtained from applying a classifier that gives a single output $p(\y_i)$ on $k$ inputs;
the constructive proof of Proposition~1 in \citet{simple2023} still applies in our case.
Moreover, the computation of $p(\sum_{i = 1}^k \pred_i = k)$ is done in a dynamic programming manner in the sense that for any $\labelsum < k$, $p(\sum_{i = 1}^k \pred_i = \labelsum)$ is an intermediate result for computing $p(\sum_{i = 1}^k \pred_i = k)$.
By caching the intermediate result, the set $\{ p(\sum_{i = 1}^k \pred_i = \labelsum) \}_{s = 0}^k$ can be obtained by the time $p(\sum_{i = 1}^k \pred_i = k)$ is computed, which finishes our proof.
\end{proof}

\section{Instance MIL Experimental Results}
\label{appendix: MIL instance}
In this section, we provide results for instance level feedback in the MIL setting. The baselines that we used in our experiments, Gated-Attention and Attention are both examples of embedding based approaches and do not make instance-level predictions. We compare against one baseline approach, which is based on Instance-Max from \citet{ilse2018attention}. This uses the maximum instance probability as an approximation for the "positiveness" of a bag. We then train it with a binary cross entropy. Note that max pooling is stated in the literature as the best performing option and makes the “most sense” in the MIL setting \cite{ilse2018attention, wang2018revisiting}.
\begin{table*}[h]
    \caption{MIL experiment on MNIST dataset on instance-level classification. Each block represents a different distribution from which we draw bag sizes---First Block: $\mathcal{N}(10, 2)$, Second Block: $\mathcal{N}(50, 10)$, Third Block: $\mathcal{N}(100, 20)$. We run each experiment for $3$ runs and report mean test accuracy with standard error. We bold the highest value and both if the standard-errors overlap.
    }
    \centering
    \Large
    \begin{adjustbox}{max width=1.0\textwidth}
    \begin{tabular}{cccccccc}
      \toprule %
      \bfseries Training Bags & \bfseries $50$ & \bfseries $100$ & \bfseries $150$ &\bfseries $200$ &\bfseries $300$ &\bfseries $400$ &\bfseries $500$  \\
      \midrule %
      Instance-Max & $0.8714 \pm 0.0015$ & $0.9577 \pm 0.0096$ & $0.9494 \pm 0.0232$ & $\textbf{0.9845} \pm \textbf{0.0009}$ & $0.9885 \pm 0.0004$ & $\textbf{0.9903} \pm \textbf{0.0008}$ & $0.9908 \pm 0.0004$\\
      \oursprincipled& $\textbf{0.9551}\pm \textbf{0.0055}$ & $\textbf{0.9780} \pm \textbf{0.0015}$ & $\textbf{0.9826} \pm \textbf{0.0014}$ & $\textbf{0.9864} \pm \textbf{0.0005}$ & $\textbf{0.9906} \pm \textbf{0.0001}$ & $\textbf{0.9905} \pm \textbf{0.0007}$ & $\textbf{0.9916} \pm \textbf{0.0003}$\\
      \midrule %
      Instance-Max & $0.9398 \pm 0.0010$ & $0.9415 \pm 0.0008$ & $0.9513 \pm 0.0113$ & $0.9686 \pm 0.0123$ & $\textbf{0.9849} \pm \textbf{0.0010}$ & $0.9848 \pm 0.0008$ & $\textbf{0.9867} \pm \textbf{0.0008}$\\
      \oursprincipled & $\textbf{0.9732} \pm \textbf{0.0009}$ & $\textbf{0.9776} \pm \textbf{0.0009}$ & $\textbf{0.9799} \pm \textbf{0.0010}$ &$\textbf{0.9816} \pm \textbf{0.0005}$ & $\textbf{0.9839} \pm \textbf{0.0013}$ & $\textbf{0.9864} \pm \textbf{0.0006}$ & $\textbf{0.9865} \pm \textbf{0.0014}$\\
      \midrule %
      Instance-Max & $0.9446 \pm 0.0007$ & $0.9462 \pm 0.0005$ & $0.9583 \pm 0.0076$ & $0.9700 \pm 0.0035$ & $0.9750 \pm 0.0017$ & $0.9776 \pm 0.0008$ & $0.9695 \pm 0.0097$\\
      \oursprincipled & $\textbf{0.9695} \pm \textbf{0.0010}$ & $\textbf{0.9717} \pm \textbf{0.0011}$ & $\textbf{0.9759} \pm \textbf{0.0013}$ & $\textbf{0.9764} \pm \textbf{0.0006}$ & $\textbf{0.9780} \pm \textbf{0.0001}$ & $\textbf{0.9805} \pm \textbf{0.0008}$ & $\textbf{0.9798} \pm \textbf{0.0003}$\\
      \bottomrule %
    \end{tabular}
    \end{adjustbox}
\label{tab:MIL results instance}
\end{table*}
\begin{table*}[h]
    \caption{MIL experiment on MNIST dataset on instance-level classification. Each block represents a different distribution from which we draw bag sizes---First Block: $\mathcal{N}(10, 2)$, Second Block: $\mathcal{N}(50, 10)$, Third Block: $\mathcal{N}(100, 20)$. We run each experiment for $3$ runs and report mean test AUC with standard error. We bold the highest value and both if the standard-errors overlap. 
    }
    \centering
    \Large
    \begin{adjustbox}{max width=1.0\textwidth}
    \begin{tabular}{cccccccc}
      \toprule %
      \bfseries Training Bags & \bfseries $50$ & \bfseries $100$ & \bfseries $150$ &\bfseries $200$ &\bfseries $300$ &\bfseries $400$ &\bfseries $500$  \\
      \midrule %
      Instance-Max & $0.4904 \pm 0.0054$ & $0.8171 \pm 0.0465$ & $0.7740 \pm 0.1072$ & $0.9288 \pm 0.0064$ & $0.9460 \pm 0.0022$ & $\textbf{0.9562} \pm \textbf{0.0037}$ & $0.9603 \pm 0.0016$\\
      \oursprincipled& $\textbf{0.8341} \pm \textbf{0.0135}$ & $\textbf{0.9040} \pm \textbf{0.0146}$ & $\textbf{0.9291} \pm \textbf{0.0070}$ & $\textbf{0.9394} \pm \textbf{0.0005}$ & $\textbf{0.9571} \pm \textbf{0.0021}$ & $\textbf{0.9592} \pm \textbf{0.0029}$ & $\textbf{0.9647} \pm \textbf{0.0012}$\\
      \midrule %
      Instance-Max & $0.4956 \pm 0.0007$ & $0.4965 \pm 0.0003$ & $0.5960 \pm 0.0821$ & $\textbf{0.7297} \pm \textbf{0.0959}$ & $\textbf{0.8566} \pm \textbf{0.0088}$ & $0.8554 \pm 0.0080$ & $\textbf{0.8733} \pm \textbf{0.0048}$\\
      \oursprincipled & $\textbf{0.7518} \pm \textbf{0.0090}$ & $\textbf{0.7900} \pm \textbf{0.0081}$ & $\textbf{0.8125} \pm \textbf{0.0106}$ &$\textbf{0.8261} \pm \textbf{0.0064}$ & $\textbf{0.8473} \pm \textbf{0.0064}$ & $\textbf{0.8717} \pm \textbf{0.0063}$ & $\textbf{0.8709} \pm \textbf{0.0120}$\\
      \midrule %
      Instance-Max & $0.4974 \pm 0.0002$ & $0.5007 \pm 0.0016$ & $0.6170 \pm 0.0571$ & $0.7099 \pm 0.0311$ & $0.7546 \pm 0.0164$ & $0.7792 \pm 0.0080$ & $0.7102 \pm 0.0867$\\
      \oursprincipled & $\textbf{0.7008} \pm \textbf{0.0077}$ & $\textbf{0.7214} \pm \textbf{0.0102}$ & $\textbf{0.7617} \pm \textbf{0.0130}$ & $\textbf{0.7673} \pm \textbf{0.0059}$ & $\textbf{0.7832} \pm \textbf{0.0011}$ & $\textbf{0.8085} \pm \textbf{0.0084}$ & $\textbf{0.8007} \pm \textbf{0.0032}$\\
      \bottomrule %
    \end{tabular}
    \end{adjustbox}
\label{tab:MIL results instance 2}
\end{table*}

Our results show that for bags of size less than or equal to $150$, our method greatly improves upon the baseline and is better for bag sizes greater than or equal to $200$. We notice that across both methods, performance goes down as bag size increases; we expect this because we have less supervision on positive bags (at least $1$ label is less meaningful for bigger bags). However, our approach is able to recover this gap compared to the baseline methodology. In the case of less overall training bags, less than $150$ training bags, we find that Instance-max really suffers on AUC while our objective guides the model to learning something more meaningful—showcasing the robustness of our methodology.
\section{Experimental Details}
In this section, we will provide relevant training details as it relates to each of our settings including hyperparameters and dataset details.
\begin{table*}[h]
    \centering
     \caption{Illustration of Adult and Magic datasets showing the number of training bags for each bag size. Note that we test on the same number of instances in all variations of bag size for both experiments: $16280$ for Adult and $3804$ for Magic. The breakdown of training bags is the same across all distributions of label proportion as well, i.e., $[0, \frac{1}{2}]$, $[\frac{1}{2}, 1]$, $[0, 1]$.  }
    \begin{tabular}{ccc}
        \toprule
        Bag Size & Training Bags Adult & Training Bags Magic \\
         \midrule
         8 & 1024 & 768\\
         32 & 256 & 192\\
         128 & 64 & 48\\
         512 & 16 & 12\\
        \bottomrule
    \end{tabular}
   
    \label{tab:adult magic dataset}
\end{table*}

\subsection{Label Proportion}
\subsubsection{Adult Dataset}
\paragraph{Hyperparameters.}We use a learning rate of $0.00001$ with the Adam Optimizer and $\beta_1 = 0.9, \beta_2 = 0.999$. The weight decay value is set to $0.001$. We also notice that adding in $L1$ regularization of $0.001$ improved the performance of our method. We train for $10000$ epochs and use a set number of warm epochs for our experiments. All parameters were obtained by using a holdout of $12.5\%$ of training data for validation on the $[0, 1]$ uniform setting. The network shown in Table \ref{tab:Adult Net} was also obtained grid search on this same validation set.
\begin{table*}[h]
    \centering
    \caption{Network used for Adult dataset in LLP Experiments.}
    \begin{tabular}{c|c}
        \toprule
         Layer & Type\\
         \midrule
         1 & fc - 2048 + ReLU\\
         2 & fc - 64 + ReLU\\
         3 & fc - 1 + logsigmoid\\
        \bottomrule
    \end{tabular}\\
    \label{tab:Adult Net}
\end{table*}

\textbf{Training Procedure.} For CL, we use the parameters and network described in the previous paragraph and early stopping criterion based on validation loss from a held out validation set ($12.5\%$ of training data). For PL, we use the parameters and network except that we do not use $L1$ as we found this improves performance. We also use an early stopping criterion based on validation loss from a held out validation set ($12.5\%$ of training data).

\paragraph{Computing Resources.} Trained on Intel(R) Xeon(R) CPU E5-2640 v4 @ 2.40GHzU and AMD EPYC 7313P 16-Core Processor CPU.

\subsubsection{Magic Dataset}

\paragraph{Hyperparameters.}
We use a learning rate of $0.0001$ with the Adam Optimizer and $\beta_1 = 0.9, \beta_2 = 0.999$. The weight decay value is set to $0.001$. We also notice that adding in $L1$ regularization of $0.001$ improved the performance of our method. We train for $10000$ epochs and use a set number of warm epochs for our experiments. All parameters were obtained by using a holdout of $12.5\%$ of training data for validation on the $[0, 1]$ uniform setting. The network shown in Table \ref{tab:Magic Net} was also obtained grid search on this same validation set.
\begin{table*}[h]
    \centering
    \caption{Network used for Magic dataset in LLP Experiments.}
    \begin{tabular}{c|c}
        \toprule
         Layer & Type\\
         \midrule
         1 & fc - 2048 + ReLU\\
         2 & fc - 1 + logsigmoid\\
        \bottomrule
    \end{tabular}
    \label{tab:Magic Net}
\end{table*}
\paragraph{Training Procedure.} For CL, we use the parameters and network described in the previous paragraph and early stopping criterion based on validation loss from a held out validation set ($12.5\%$ of training data). For PL, we use the parameters and network except that we do not use $L1$ regularization as we found this improves performance. We also use an early stopping criterion based on validation loss from a held out validation set ($12.5\%$ of training data). In Table \ref{tab: LLP main}, there are two instances where we reran our method with no validation set, i.e. Magic $[0, \frac{1}{2}]$ and Magic $[\frac{1}{2}, 1]$ because early stopping proved to be unstable with a small amount of validation samples. In these experiments, we only use $87.5\%$ of training data and ran for a fixed number of epochs: $2000$. This is because with only one validation bag, we can find ourselves with some instability in the training procedure. Note that PL did not benefit from rerunning with this method. 

\paragraph{Computing Resources.} Trained on Intel(R) Xeon(R) CPU E5-2640 v4 @ 2.40GHzU and AMD EPYC 7313P 16-Core Processor CPU.

\subsection{Multi-Instance Learning}
\subsubsection{MNIST-Bags}
\paragraph{Dataset Details.} We experiment on various modulations of training bag size and number of training bags. In the main experiment, we draw bag size from: $\{\mathcal{N}(10, 2), \mathcal{N}(50, 10), \mathcal{N}(100, 20)\}$ and modulate number of training bags from $\{50, 100, 150, 200, 300, 400, 500\}$. In total, this makes $21$ different settings. In our follow up experiment where we limit the number of training bags and overall bag size, we draw bag size from: $\{\mathcal{N}(5, 1), \mathcal{N}(10, 2)\}$. For each experiment, we sample $1000$ test bags with size coorelating to the normal distribution associated.

\paragraph{Hyperparameters.} All of our hyperparameters derive from \citet{ilse2018attention}. This includes using the Adam optimizer with $\beta_1 = 0.9, \beta_2 = 0.999$, a learning rate of $0.0005$, weight decay of $0.0001$, and max epochs of $200$. For the main experiment, we use a validation holdout of $20\%$ to find a class weight for balancing the loss on positive bags versus negative bags. (We omit this step for our limited data experiments.)
\begin{table*}[h]
    \centering
    \caption{Network used for all MNIST experiments in MIL settings. Derived from the same network shown in \citet{ilse2018attention}.}
    \begin{tabular}{c|c}
        \toprule
         Layer & Type\\
         \midrule
         1 & conv(5, 1, 0) - 20 + ReLU\\
         2 & maxpool(2, 2)\\
         3 & conv(5, 1, 0) - 50 + ReLU\\
         4 & maxpool(2, 2)\\
         5 & fc-500 + ReLU\\
         6 & fc-1 + logsigmoid\\
        \bottomrule
    \end{tabular}

\end{table*}

\paragraph{Training Procedure.} For CL, we train on all the training data for the maximum number of iterations: $200$. We also use all of the hyperparameters described in the last paragraph and \citet{ilse2018attention}. Because we were unable to reproduce the values in \citet{ilse2018attention} for the Attention and Gated Attention mechanisms, we reran their experiments with our own implementation. To try and reproduce their results, we follow their optimization procedure. Specifically, we use a holdout of training data~($20 \%$) and validation loss + error for early stopping. We found that doing so provided the best values for Attention and Gated Attention.

\paragraph{Instance Pooling.}To pool together instance level classification at the final stage, there are several operations that have been considered in the literature. Some include using the max and mean operator \cite{wang2018revisiting}. We propose a new method based on our constraint. We compute the relevant probabilities defined in \ref{sec: A Unified Approach} for the MIL setting. More specifically, we compute the probability that a bag has at least one positive instance. We then round the probability of at least one positive instance to obtain our bag level classification.

\paragraph{Computing Resources.} Trained on AMD EPYC 7313P 16-Core Processor CPU.

\subsubsection{Colon Cancer Dataset}
\paragraph{Dataset Details.} The dataset consists of $100$ H\&E images of which we use $99$ of them. There are a total of $51$ positive bags and $48$ negative bags. We use a series of data augmentations including flipping, cropping, and rotation\footnote{Refer to https:\slash \slash github.com\slash utayao\slash Atten\_Deep\_MIL for the preprocessed data generation code}. Note that these data augmentations do not align with those in the original paper by \citet{ilse2018attention}, so we reran their baseline methods.

\paragraph{Hyperparameters.} We derive our set of hyperparameters from \citet{ilse2018attention}. We use the Adam optimizer for all experiments with $\beta_1 = 0.9, \beta_2 = 0.999$. This includes weight decay of $0.0005$, learning rate of $0.0001$, and a maximum of $100$ epochs.

\begin{table*}[h]
    \centering
    \caption{MIL: Network used for CL in colon cancer dataset. Derived from the same network shown in \citet{ilse2018attention}.}
    \begin{tabular}{c|c}
        \toprule
         Layer & Type\\
         \midrule
         1 & conv(4, 1, 0) - 36 + ReLU\\
         2 & maxpool(2, 2)\\
         3 & conv(3, 1, 0) - 48 + ReLU\\
         4 & maxpool(2, 2)\\
         5 & fc-512 + ReLU\\
         6 & dropout\\
         7 & fc - 512 + ReLU\\
         8 & dropout\\
         9 & fc-2 + logsigmoid\\
        \bottomrule
    \end{tabular}

\end{table*}

\paragraph{Training Procedure.} We perform $10$-fold cross-validation and average the mean value of each metric over $5$ seeds. For CL, we do not use early stopping and train on all data for the maximum number of epochs using the hyperparameters mentioned in the previous paragraph. For our baselines, Attention and Gated-Attention, we use the same hyperparameters as mentioned above. However, we follow the optimization procedure detailed in \citet{ilse2018attention} to give try and reproduce the results given in the paper. This involves using a held out validation set for early stopping with validation loss + error as the stopping criteria. For this experiment, this validation set is assumed to be the size of $1$ fold or one-ninth of the training data. (We find that including early stopping helps increase performance for both baselines.)

\paragraph{Computing Resources.} Trained on NVIDIA RTX A$6000$ GPU.

\subsection{PU Learning}
\subsubsection{MNIST Dataset}
\paragraph{Dataset Details.} Our settings derive from \citet{garg2021mixture}. We construct two main datasets from the original MNIST dataset. This includes the Binarized MNIST and MNIST-$17$ as detailed in Table \ref{tab:PU dataset}. In the Binarized MNIST setting, we assign digits $[0 - 4]$ as positive and $[5 - 9]$ as negative. In the MNIST-$17$ setting, we assign digit $1$ as positive and $7$ as negative. The test set for both settings are chosen from a set of unlabeled data.

\begin{table*}[h]
    \centering
    \caption{Network used for MNIST data in PU Learning experiments. Resembles the network in \citet{garg2021mixture} except we replace the last layer with a single output and logsigmoid instead of softmax.}
    \begin{tabular}{c|c}
        \toprule
         Layer & Type\\
         \midrule
         1 & fc - 5000 + ReLU\\
         2 & fc - 5000 +  ReLU\\
         3 & fc - 50 + ReLU \\
         4 & fc-1 + logsigmoid\\
         \bottomrule
    \end{tabular}
    \label{tab:PU MLP1}
\end{table*}

\paragraph{Hyperparameters.} We fix weight decay to be $0.0005$ and Adam optimizer for all experiments with $\beta_1 = 0.9, \beta_2 = 0.999$. We use a learning rate of $0.0001$ and train for a maximum of $2000$ epochs in all experiments for both CL and CL-expect. We use a validation set with size equal to $10\%$ of training data in order to weigh the loss on positive data versus loss on unlabeled data.

\paragraph{Training Procedure.}For MNIST dataset experiments, we use a fully connected multi-layer perceptron~(MLP) defined in Table \ref{tab:PU MLP1}. We train CL and CL-expect with the hyperparameters defined in the previous paragraph. Furthermore, we use a held out validation set, equivalent to $10\%$ of training data, for early stopping. While as results in \citet{garg2021mixture} are aggregated over $10$ epochs, we choose to pick a single epoch based on our early stopping as this makes the most sense for our optimization technique.

\paragraph{Computing Resources.} Trained on a singular NVIDIA RTX $2080$-Ti GPU.

\begin{table*}[h]
    \centering
    \caption{Table taken almost directly from \citet{garg2021mixture}. Table shows the break down of the various simulated PU datasets that we train on.}
    \begin{tabular}{ccccccc}
        \toprule
         Dataset & Simulated PU Dataset & P vs N & \multicolumn{2}{c}{Training} & Test\\
         & & & Positive & Unlabeled & Unlabeled\\ 
         \midrule
         \multirow{2}{*}{CIFAR} & Binarized CIFAR & $[0 - 4]$ vs. $[5 - 9]$ & $12500$ & $12500$ & $2500$\\
         & CIFAR Cat vs. Dog & $3$ vs. $5$ & $3000$ & $3000$ & $500$\\
         \multirow{2}{*}{MNIST} & Binarized MNIST & $[0 - 4]$ vs. $[5 - 9]$ & $15000$ & $15000$ & $2500$\\
         & MNIST-$17$ & $1$ vs. $7$ & $3000$ & $3000$ & $500$\\
         \bottomrule
    \end{tabular}
    \label{tab:PU dataset}
\end{table*}

\subsubsection{CIFAR Dataset.} 
\paragraph{Dataset Details.} Our settings derive from \citet{garg2021mixture}. We construct two main datasets from the original CIFAR dataset. This includes the Binarized CIFAR and CIFAR Cat vs. Dog as detailed in Table \ref{tab:PU dataset}. In the Binarized CIFAR setting, we assign classes $[0 - 4]$ as positive and classes $[5 - 9]$ as negative. In the CIFAR Cat vs. Dog setting, we assign Cats~(class $3$) as positive and Dogs~(class $5$) as negative. The test set for both settings are chosen from a set of unlabeled data.

\paragraph{Hyperparameters.} We fix weight decay to be $0.0005$ and Adam optimizer for all experiments with $\beta_1 = 0.9, \beta_2 = 0.999$. We use a learning rate of $0.0001$ for all experiments except for CL-expect in the CIFAR Cat vs. Dog setting where we use $0.001$. We use a validation set with size equal to $10\%$ of training data in order to weigh the loss on positive data versus loss on unlabeled data.

\paragraph{Training Procedure.} We use a ResNet-$18$ architecture for all CIFAR experiments. We train CL and CL-expect with the hyperparameters defined in the previous paragraph. Furthermore, we use a held out validation set, equivalent to $10\%$ of training data, for early stopping. While as results in \citet{garg2021mixture} are aggregated over $10$ epochs, we choose to pick a single epoch as this makes the most sense for our optimization technique. 

\paragraph{Computing Resources.} Trained on a singular NVIDIA $2080$-Ti GPU.

\subsubsection{Early Stopping}
The early stopping procedure that we used in our experiments was a bit unique. Using our holdout of validation data, we do early stopping using the proximity to the class prior and validation loss to break ties. We can imagine that if we perfectly identify all positive and unlabeled samples and then calculate accuracy against the actually provided labels, we would get an accuracy equivalent to the class prior. This is because all the positive samples in the unlabeled set would be labeled incorrect. 

\section{Limitations} 
In MIL, one assumption that our approach makes is that the label distribution of instances within a bag are independent. This is a common assumption in the literature as stated in \citet{carbonneau2018multiple}. However, in most practical scenarios, this assumption does not hold. Through empirical validation we show that while this is true, our method can still outperform state of the art benchmarks on the Colon Cancer dataest \cite{2016colon}, which violates the independence assumption~(Table \ref{tab:MIL2}). 
In PU Learning, our approaches—CL and CL-expect—assume that the batch of data is sufficiently large such that the distribution is roughly binomial. Note that CL-expect relies on the expected value of the binomial distribution while as CL relies on the entire distribution. If the batch of data was too small, this assumption would certainly degrade as batch-to-batch variance would make our loss unsuitable.  

\section{Broader Impact}
We provide a unified framework to count-based weakly supervised learning.
One benefit enjoyed by our approach is that it does not necessitate the 
presence of instance labels, and is therefore privacy preserving.
In many of our settings, we show that we can still train a strong instance
level classifier even with these weak bag-level annotations.
Another important use case that we have not fully explored in this paper is 
debiasing classifiers through using our proportion loss as a regularization term.
If we know the expected class priors, we can penalize any bias in the classifiers predictions.
Care should be taken however since, just as our approach can be used to de-bias 
classifiers, it can also be used by a malicious actor to bias them.
\end{document}